\documentclass[10pt,journal,letterpaper,twoside]{IEEEtran}

\usepackage[T1]{fontenc}
\usepackage{newtxtext}
\usepackage{amsmath,amssymb,mathtools}
\usepackage{amsthm}
\usepackage{newtxmath}
\usepackage{booktabs, multirow, array}
\usepackage[table]{xcolor}
\usepackage{colortbl}
\usepackage{algorithm}
\usepackage[noend]{algpseudocode}
\usepackage{enumitem}
\usepackage{xspace}
\usepackage{microtype}
\usepackage{graphicx}
\usepackage{pifont}
\usepackage{listings}
\usepackage{url}
\usepackage{cite}
\usepackage{placeins}
\usepackage[most]{tcolorbox}
\usepackage[hidelinks]{hyperref}

\graphicspath{{./}{fig/}{logos/}}
\makeatletter
\def\input@path{{./}}
\makeatother

\definecolor{Ink}{HTML}{20262E}
\definecolor{Navy}{HTML}{274C77}
\definecolor{Teal}{HTML}{2A8178}
\definecolor{Orange}{HTML}{C66B2B}
\definecolor{Brick}{HTML}{A64B3C}
\definecolor{Line}{HTML}{C8D2D9}
\definecolor{PaleBlue}{HTML}{F0F5F8}
\definecolor{PaleTeal}{HTML}{F1F8F5}
\definecolor{PaleGray}{HTML}{F5F6F7}
\definecolor{PaleOrange}{HTML}{FFF5ED}
\definecolor{SkillZipPaleGrey}{HTML}{ECEEF0}
\definecolor{LossyGrey}{HTML}{7A848C}
\definecolor{DupYellow}{HTML}{FCEFC4}   
\definecolor{KeepGreen}{HTML}{D4E8DC}   
\definecolor{GuardBlue}{HTML}{D6E6F5}   
\definecolor{PreprintBlue}{HTML}{28658A}
\definecolor{PreprintBlueDark}{HTML}{1F4F6C}
\definecolor{PreprintGreen}{HTML}{2C805B}
\definecolor{PreprintGreenDark}{HTML}{226548}
\definecolor{PreprintOrange}{HTML}{C5672D}
\definecolor{PreprintRule}{HTML}{626A70}
\definecolor{SkillZipBlue}{RGB}{35,79,119}
\definecolor{SkillZipGreen}{RGB}{45,105,78}
\definecolor{SkillZipPaleBlue}{RGB}{232,240,247}
\definecolor{SkillZipPaleGray}{RGB}{245,246,247}

\newtcolorbox{abstractbox}{
  enhanced,
  breakable,
  colback=PaleBlue,
  colframe=PreprintBlue,
  boxrule=0.55pt,
  arc=2pt,
  left=5pt,
  right=5pt,
  top=4pt,
  bottom=4pt,
  before skip=0pt,
  after skip=5pt,
  fontupper=\small
}

\newtcolorbox{projectbox}{
  enhanced,
  breakable,
  colback=PaleTeal,
  colframe=PreprintGreen,
  boxrule=0.55pt,
  arc=2pt,
  left=5pt,
  right=5pt,
  top=3pt,
  bottom=3pt,
  before skip=0pt,
  after skip=7pt,
  fontupper=\small
}

\newtcolorbox{glancebox}{
  enhanced,
  breakable,
  title={SkillZip Pro at a glance},
  colback=PaleOrange,
  colframe=PreprintOrange,
  colbacktitle=PreprintOrange,
  coltitle=white,
  fonttitle=\bfseries\sffamily\footnotesize,
  fontupper=\small,
  boxrule=0.55pt,
  arc=2pt,
  left=5pt,
  right=5pt,
  top=3pt,
  bottom=3pt,
  toptitle=2pt,
  bottomtitle=2pt,
  before skip=5pt,
  after skip=7pt
}

\newtcolorbox{takeawaybox}{
  enhanced,
  title={Takeaway},
  colback=PaleTeal,
  colframe=PreprintGreen,
  colbacktitle=PreprintGreen,
  coltitle=white,
  fonttitle=\bfseries\sffamily\footnotesize,
  fontupper=\small,
  boxrule=0.55pt,
  arc=2pt,
  left=5pt,
  right=5pt,
  top=3pt,
  bottom=3pt,
  toptitle=2pt,
  bottomtitle=2pt,
  before skip=5pt,
  after skip=6pt
}

\newtcolorbox{insightbox}[1]{
  enhanced,
  breakable,
  colback=PaleOrange,
  frame hidden,
  borderline west={2.6pt}{0pt}{PreprintOrange},
  sharp corners,
  boxrule=0pt,
  fontupper=\small,
  coltitle=PreprintOrange,
  fonttitle=\bfseries\sffamily\footnotesize,
  detach title,
  before upper={\tcbtitle\ \ },
  title={\textsc{Insight #1}\,\raisebox{-0.5pt}{\ding{72}}},
  left=9pt,
  right=6pt,
  top=3pt,
  bottom=3pt,
  before skip=6pt,
  after skip=6pt
}

\newtcolorbox{roadmapbox}{
  enhanced,
  breakable,
  title={Appendix roadmap},
  colback=PaleBlue,
  colframe=PreprintBlue,
  colbacktitle=PreprintBlue,
  coltitle=white,
  fonttitle=\bfseries\sffamily\small,
  fontupper=\small,
  boxrule=0.55pt,
  arc=2pt,
  left=6pt,
  right=6pt,
  top=4pt,
  bottom=4pt,
  toptitle=2pt,
  bottomtitle=2pt,
  before skip=4pt,
  after skip=8pt
}

\newtcolorbox{promptbox}[1][]{
  enhanced,
  breakable,
  title={#1},
  colback=PaleBlue,
  colframe=PreprintBlue,
  colbacktitle=PreprintBlue,
  coltitle=white,
  fonttitle=\bfseries\sffamily\footnotesize,
  fontupper=\ttfamily\footnotesize,
  boxrule=0.5pt,
  arc=2pt,
  left=6pt,
  right=6pt,
  top=4pt,
  bottom=4pt,
  toptitle=2pt,
  bottomtitle=2pt,
  before skip=5pt,
  after skip=6pt,
  before upper={\setlength{\parskip}{2pt}}
}

\newcommand{\method}{\textsc{SkillZip Pro}\xspace}
\newcommand{\base}{\textsc{SkillZip}\xspace}
\newcommand{\E}{\mathbb{E}}

\newcommand{\contract}{\mathcal{C}}

\newcommand{\cmark}{\ding{51}}
\newcommand{\xmark}{\ding{55}}
\newcommand{\Description}[1]{}
\newcommand{\hl}[1]{\textcolor{SkillZipBlue}{\textbf{#1}}}
\newcommand{\win}[1]{\textcolor{SkillZipGreen}{\textbf{#1}}}
\newcommand{\caveat}[1]{\textcolor{LossyGrey}{\textbf{#1}}}
\newcommand{\dup}[1]{\colorbox{DupYellow}{#1}}       
\newcommand{\kept}[1]{\colorbox{KeepGreen}{#1}}       
\newcommand{\guard}[1]{\colorbox{GuardBlue}{#1}}      
\newcommand{\gone}[1]{\textcolor{LossyGrey}{#1}}      

\theoremstyle{definition}
\newtheorem{definition}{Definition}[section]
\newtheorem{assumption}[definition]{Assumption}
\theoremstyle{plain}
\newtheorem{proposition}[definition]{Proposition}

\lstdefinestyle{compactjson}{
  basicstyle=\ttfamily\scriptsize,
  columns=fullflexible,
  breaklines=true,
  frame=single,
  rulecolor=\color{Line},
  backgroundcolor=\color{PaleGray},
  xleftmargin=2pt,
  xrightmargin=2pt,
  aboveskip=5pt,
  belowskip=5pt,
  showstringspaces=false
}

\setlist[itemize]{leftmargin=*,topsep=2pt,itemsep=1pt,parsep=0pt,partopsep=0pt}
\setlist[enumerate]{leftmargin=*,topsep=2pt,itemsep=1pt,parsep=0pt,partopsep=0pt}
\AtBeginDocument{\setlength{\parfillskip}{0pt plus 0.76\columnwidth}}

\newcommand{\AffiliationLogoStrip}{%
  \parbox{\textwidth}{%
    \normalfont\normalsize
    \raggedright
    \raisebox{0.010in}[0.33in][0pt]{%
      \includegraphics[height=0.255in,keepaspectratio]{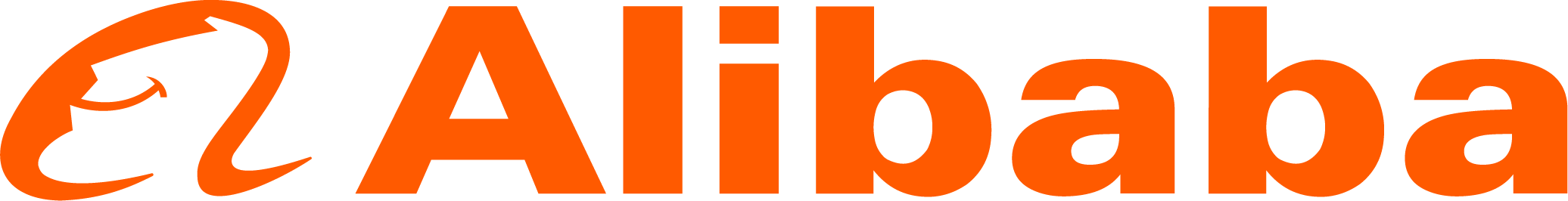}}%
    \hspace{0.30in}%
    \raisebox{-0.055in}[0.5in][0pt]{%
      \includegraphics[height=0.4in,keepaspectratio]{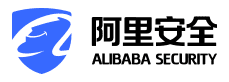}}%
    \hspace{0.30in}%
    \raisebox{0.010in}[0.33in][0pt]{%
      \includegraphics[height=0.315in,keepaspectratio]{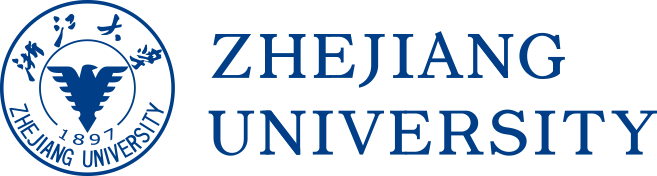}}%
    \par\vspace{0.05in}%
    \noindent\color{PreprintRule}\rule{\textwidth}{0.55pt}%
    \par\vspace{0.235in}%
  }%
}

\hypersetup{
  pdftitle={SkillZip Pro: Execution-Aware Dynamic Compression of Progressively Loaded Skills for Self-Evolving Agents},
  pdfauthor={Xiaofan Bai, Chao Liu, Hongqiang Lin,  Xuan Jin, Xipeng Cao, Yuhong Li},
  pdfsubject={Harness-free, bundle-aware compression of progressively loaded agent skills},
  pdfkeywords={LLM agents, agent skills, progressive disclosure, skill compression, minimum description length}
}

\title{%
  \AffiliationLogoStrip\par
  {\normalfont\fontsize{20.5}{24.0}\selectfont
  SkillZip Pro: Execution-Aware Dynamic Compression of\\[-0.12em]
  Progressively Loaded Skills for Self-Evolving Agents}
}

\author{%
  {\normalsize Xiaofan Bai$^{1}$, Chao Liu$^{1}$, Hongqiang Lin$^{2}$, Di Wu$^{1}$,  Mingli Song$^{2}$, 
  Xuan Jin$^{1,\dagger}$, Xipeng Cao$^{1}$, Yuhong Li$^{1}$}\\[-0.12em]
  {\footnotesize $^{1}$Alibaba Group \quad $^{2}$Zhejiang University 
  }\\[-0.14em]
  {\scriptsize\ttfamily
  baixiaofan.bxf@alibaba-inc.com
  }\\[-0.18em]
  
  {
  $^{\dagger}$Project leader}%
}
\IEEEtitleabstractindextext{%
\begin{abstractbox}
\textbf{Abstract---}
Production agent skills are directory bundles, not isolated prompts. The root is loaded at activation; references, schemas, scripts, assets, and nested subskills are loaded only when an execution path needs them. Compressing only the root misses most deployment cost and may move branch-specific details into the always-loaded context. Flattening instead destroys progressive-loading boundaries.

We introduce \method, an evaluation-free compressor for complete, progressively loaded skill bundles. It leaves the agent harness unchanged and emits an ordinary directory. The method combines two safeguards. First, it compresses \emph{across files}, removing content from a reference or subskill when the root or a declared environment contract already provides it. Second, it preserves routing, so every required file and directly callable entry remains reachable after rewriting. Users can configure \method along two independent axes. \emph{One-Shot} mode rebuilds the full bundle; \emph{Continual} mode reuses state and applies Zip-on-Write after each evolution patch. \emph{Persistent} compression rewrites the shipped bundle to reduce storage and runtime context. \emph{Transient} compression keeps that bundle byte-identical and builds a task-specific view, reducing only per-run context after build cost. Entry contracts mark private, public, and conditional resources; a multi-entry audit preserves standalone public subskills.

On a production content-moderation skill evaluated by our industrial multi-round harness, \method removes \hl{38\%} of skill bundle tokens and \hl{10.4\%} of end-to-end per-run tokens with no quality loss, while an unprotected 71\% configuration loses up to 26 accuracy points to one-sided false positives. On a multi-entry bundle, \method effeciently reduces token cost while near-perfectly preserving every route and public entry.

\end{abstractbox}
\begin{projectbox}
\textbf{Available at:}
\url{https://github.com/yutou520131/SkillZip-Pro}
\end{projectbox}
}

\begin{document}
\maketitle
\thispagestyle{empty}
\IEEEdisplaynontitleabstractindextext

\section{Introduction}
\label{sec:intro}

Agent skills package reusable instructions, procedures, and tools for use across tasks. A prototype may represent a skill as one prompt, but a production skill is usually a directory. A short root declares when the skill applies; references and subskills hold branch-specific knowledge; scripts implement deterministic operations; and schemas constrain inputs and outputs. Agents load this bundle progressively: catalog metadata is visible before selection, the root appears after activation, and auxiliary files are opened only when the current path requires them~\cite{anthropic2025claudecode,openai2025codex}.

This execution mode changes the compression objective. A root-only method can report an attractive ratio while leaving most deployed text untouched; it can even move rare branch details into the root and increase every invocation's context. Concatenation fixes the accounting gap but destroys progressive disclosure. Neither reflects the agent's actual cost.

\begin{figure}[t]
  \centering
  \includegraphics[width=1.\columnwidth]{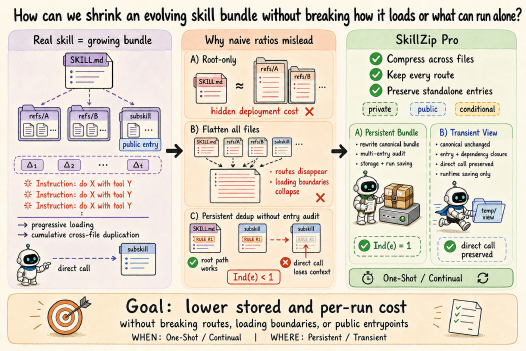}
  \caption{The dynamic compression and progressive loading of agent skills.}
  \Description{A line chart showing skill tokens continuing to grow while unique procedural content rises early and then saturates.}
  \label{fig:growth}
\end{figure}

Self-evolving agents make the problem larger. Systems such as Voyager, Reflexion, and later evolution frameworks accumulate successful routines, counterexamples, and repair rules over time~\cite{wang2023voyager,shinn2023reflexion,zhang2025ace,gao2025selfevolving}. Repetition grows both within files and across branches. Sharing that logic saves space, but placement matters: moving content used by two rare branches into the root charges every task. Compression must therefore optimize a \emph{progressively loaded resource graph}, not a flat string.

Existing methods are not suitable for this setting. Prompt compressors optimize token salience or reconstruction in a flat context~\cite{jiang2023llmlingua,jiang2023longllmlingua,pan2024llmlingua2}. SkillReducer shortens skills with evaluation feedback, which requires rollouts and ties compression to sampled tasks~\cite{gao2026skillreducer}. Our earlier \base formulation instead found a shortest faithful representation without evaluations~\cite{bai2026skillzipevaluationfreeskillcompression}. It preserved typed behavioral contracts and supported Zip-on-Write, but treated one document at a time and could not model references, subskills, or progressive-loading costs.

We present \method, a bundle-aware extension of \base that preserves a harness-agnostic constraint: \emph{the agent harness is unchanged}. Its output is an ordinary skill directory that uses existing file readers, paths, and loading behavior; no resolver, interception hook, or runtime protocol is required.

\method adds two capabilities that a single-document compressor lacks. \textbf{\textcolor{SkillZipBlue}{Pillar~1, activation-aware cross-file compression,}} removes content already supplied by the root or a declared environment contract, factors repeated branch content within its loading scope, and moves long guarded branches into on-demand capsules. These transformations reduce cross-file redundancy without enlarging the always-loaded layer. \textbf{\textcolor{PreprintGreen}{Pillar~2, cross-file routing preservation,}} locks routing instructions and audits the materialized directory before publication. Every required file must remain reachable, and interface contracts such as schemas, label lists, and worked formats remain contiguous and verbatim.
The compiler supports two compression modes. \emph{One-Shot Compression} scans and optimizes the full directory for migration, release, or periodic repacking. \emph{Continual Compression} applies each evolution patch verbatim, reuses unchanged contracts, and repairs only the affected graph closure, where a global repack is used to bound the accumulated drift. 

To accommodate whether referenced files and subskills must remain independently usable outside the root skill, SkillZip Pro introduces a separate output-lifecycle choice. \textbf{Persistent Bundle Compression} rewrites the canonical directory and is best suited to private resources used only through the root, reducing both storage and future runtime context. If it modifies a public entry, a multi-entry audit must verify that the entry remains independently usable. \textbf{Transient Execution-View Compression} instead leaves the canonical directory unchanged and constructs a task-specific view for each run, preserving the independent usability of all shipped entries by construction while reducing runtime context but not disk usage. Either lifecycle can use either compression mode. Section~\ref{sec:lifecycle} defines the resulting four combinations and their costs.


Both schedules resolve explicit local references into a conservative resource graph, extract typed contracts from text, and lock executable or binary artifacts. The objective separates catalog, activation, deployment, and path-weighted costs. After transformation, an independent audit rereads the emitted directory and checks graph closure, scope, contract coverage, and byte identity before atomic publication.


Our contributions are:
\begin{itemize}
\item \textbf{\textcolor{SkillZipBlue}{Bundle-aware compression for progressively loaded skills.}} We extend skill compression from a single document to a resource graph and jointly optimize content placement across the root, reference files, and subskills. This design removes cross-file redundancy without moving rarely used content into the always-loaded root.

\item \textbf{\textcolor{PreprintGreen}{Safe routing and independent entry preservation.}} We treat routing instructions and entry contracts as explicit constraints and audit the materialized bundle before deployment. This ensures that required branches remain reachable and that public subskills and references remain independently usable.

\item \textbf{Flexible compression for different update and deployment requirements.} We separate when compression is performed from where its output is stored. One-Shot and Continual modes support static and self-evolving skills, while Persistent and Transient lifecycles accommodate different requirements for storage reduction and independent resource usability.

\item \textbf{Comprehensive compression results with production evidence.} Every removal is supported by a containment, coverage, or logged-entailment witness, while interface contracts remain intact. On a production skill, \method reduces deployed content by \hl{38\%} and end-to-end per-run tokens by \hl{10.4\%} while preserving decision quality.
\end{itemize}

\section{Related Work}
\label{sec:related}

\textbf{\textcolor{SkillZipBlue}{Self-evolving agents and skill memories.}}
Agents increasingly retain reusable procedures instead of solving every task from scratch. Voyager stores executable skills, Reflexion records verbal feedback, and ACE organizes evolving context as structured playbooks~\cite{wang2023voyager,shinn2023reflexion,zhang2025ace}. Other systems revise, optimize, or formalize skill artifacts~\cite{liu2026skillrevise,wang2026skillgrad,yang2026skillopt,zhang2026formalskill}. As these libraries grow, equivalent rules accumulate at different scopes and branch updates can conflict. \method compresses the resulting representation regardless of how it was created.

\textbf{\textcolor{SkillZipBlue}{Prompt and context compression.}}
LLMLingua, LongLLMLingua, and LLMLingua-2 remove or rewrite low-utility tokens~\cite{jiang2023llmlingua,jiang2023longllmlingua,pan2024llmlingua2}; selective-context, gisting, and recompression methods also target flat inference contexts~\cite{li2023selective,mu2023gisting,xu2024recomp}. Skill bundles differ in two ways: rare content may still be mandatory, and each execution loads only part of the directory. \method therefore compresses typed behavioral units while preserving loading boundaries.

\textbf{\textcolor{SkillZipBlue}{Skill compression and runtime representations.}}
SkillReducer uses evaluation feedback to compress skill text~\cite{gao2026skillreducer}; parameterized skills and execution-time mechanisms instead change the representation or runtime~\cite{zhang2026skilltolora,chen2026skillrt}. Our transient lifecycle also builds content per run, but it remains evaluation-free and harness-neutral: the agent receives an ordinary directory and no resolver observes execution. The closest precursor is \base, which introduced typed coverage and shortest-cover compression for one document. \method extends its optimization object, cost model, rewrites, and audit to progressively loaded bundles.

\textbf{\textcolor{SkillZipBlue}{Grammar-based compression and MDL.}}
Minimum description length (MDL) selects the shortest faithful representation~\cite{grunwald2007mdl,galbrun2022mdl}. Grammar compressors such as Sequitur and Re-Pair factor sequences when references amortize definition cost~\cite{nevill1997sequitur,larsson2000repair}. We add semantic types, activation scope, path weights, locked artifacts, and deployment constraints. Sharing is allowed only when it lowers loaded-path cost.

\section{Problem Formulation}
\label{sec:problem}

\subsection{A Skill Is a Progressively Loaded Bundle}

We model a skill as a rooted directory $B=(V,E,r)$. Each resource $v\in V$ has a canonical path $p_v$, bytes $b_v$, media type $m_v$, and loading class $\ell_v$. The root $r$ is normally \texttt{SKILL.md}. An edge $e=(u,v,g,s)$ records a reference from $u$ to $v$, its guard $g$, and source span $s$; a guard may be unconditional or author-written, such as ``for CSV export.''

We distinguish four loading layers:
\begin{enumerate}
  \item \textbf{Catalog:} the name, description, and entry metadata visible before activation;
  \item \textbf{Activation:} the root instructions loaded whenever the skill is selected;
  \item \textbf{Path:} the transitive resources loaded for a particular execution branch;
  \item \textbf{Deployment:} all bytes distributed with the skill, including resources rarely or never read.
\end{enumerate}
An edit can improve one layer while harming another. Moving a rare 500-token branch into the root may reduce \hl{deployment} length after deduplication yet add 500 tokens to every activation. We therefore measure each layer separately.

\begin{definition}[Safe resource graph]
A graph is \emph{safe} if every resolved target is a regular file inside the canonical bundle root, every recorded internal reference has an existing target, and every edge retains its source span and guard. External URLs are recorded as external edges but are never fetched. Symbolic-link escape, path traversal, missing targets, and ambiguous dynamic paths are rejected in strict mode.
\end{definition}

The resolver is intentionally conservative. Markdown links, explicit path literals, and declared subskill entries are recognized; an unrecognized file remains in the deployment bundle but contributes no inferred dependency. This prevents the compressor from inventing loading semantics.

\subsection{Entry Contracts: Which Resources Must Stand Alone}
\label{sec:entrycontracts}

Agents usually reach resources from the root, but they may also select a nested subskill or reference directly. The compressor must know this before deleting content: text covered by the root may still be essential to a direct call. An \emph{entry contract} $\eta_v$ therefore assigns each node one of three roles:
\begin{itemize}
  \item \textbf{Private (internal):} reached only through the root, with no standalone-use requirement.
  \item \textbf{Public (standalone):} directly selectable and therefore fully usable without the root.
  \item \textbf{Conditional:} directly usable only with a declared host context or dependency closure.
\end{itemize}
The author or catalog declares the contract; the compressor never infers it. For a public or conditional entry $e$, define independence as
\begin{equation}
\operatorname{Ind}(e,B')=
\operatorname{Cov}(e,B'_{\downarrow e})
\,\mathbb{I}[e\text{ discoverable}],
\label{eq:independence}
\end{equation}
where $\operatorname{Cov}$ is the fraction of source contract units covered by the closure $B'_{\downarrow e}$ reachable from $e$, including any declared host context. Discoverability requires an executable entry at the advertised path. Thus $\operatorname{Ind}=1$ means that the entry remains independently usable; missing content lowers coverage, while a renamed or hidden entry has independence zero.

\subsection{Typed Resource Contracts}

For each textual node $v$, \method extracts a contract
\begin{equation}
\contract_v=\langle I_v,W_v,T_v,R_v,O_v,E_v,P_v,L_v\rangle,
\label{eq:contract}
\end{equation}
where $I$ contains applicability and interface conditions; $W$ workflow states and ordering edges; $T$ tool or resource requirements; $R$ rules and prohibitions; $O$ output fields and formats; $E$ evidence or verification obligations; $P$ provenance to source spans; and $L$ locked residuals that must remain verbatim. Each unit $a\in\contract_v$ has a semantic type, normalized payload, scope, guard, and provenance.

Executable code, structured data, images, and other non-instructional artifacts are \emph{locked nodes}. Phase A may rename neither their path nor their bytes. Their inbound references can be rewritten only when the referring Markdown is rewritten and the canonical target remains identical.

For a candidate bundle $B'$, let $\operatorname{cover}(a,B')$ mean that a compatible statement, reference, or locked artifact in $B'$ entails unit $a$ on every path where it originally applied. Coverage is type constrained: an example cannot cover a prohibition, generic advice cannot cover a required output field, and an unrelated branch cannot cover a guarded rule.

\begin{definition}[Bundle faithfulness]
$B'$ is faithful to $B$ under environment contract $H$ if (i) every source unit is covered at a compatible scope, (ii) every locked node is byte-identical, (iii) all internal references in $B'$ resolve safely, and (iv) any unit removed by host entailment has an exact typed witness in $H$ bound to the audited environment digest.
\end{definition}

This structural guarantee does not assert behavioral equivalence for every language model. It protects the bundle's explicit contract, not every latent reading.

\subsection{Four Costs, One Constrained Objective}

Let $\tau(x)$ be the deployment tokenizer or another declared length function. For a bundle $B'$, define
\begin{align}
C_{\mathrm{cat}}(B') &= \tau(\text{name, description, entry}), \\
C_{\mathrm{act}}(B') &= \tau(b'_r), \\
C_{\mathrm{dep}}(B') &= \sum_{v\in V'} \tau(b'_v), \\
C_{\mathrm{path}}(B';\pi) &= \sum_{v\in\operatorname{load}(B',\pi)} \tau(b'_v).
\end{align}
Here $\pi$ denotes an execution path or task class and $\operatorname{load}$ follows the unchanged agent's progressive-loading behavior. Given an empirical path distribution $q(\pi)$, our primary optimization target is
\begin{equation}
J(B')=C_{\mathrm{cat}}(B')+C_{\mathrm{act}}(B')
+\E_{\pi\sim q}C_{\mathrm{path}}(B';\pi)
+\lambda C_{\mathrm{dep}}(B'),
\label{eq:objective}
\end{equation}
subject to bundle faithfulness. We use $\lambda=0.05$ as a transparent default so that storage matters without dominating runtime exposure. If no trace distribution is available, explicit branch guards induce a uniform distribution over reachable leaf paths; every reported result must identify which estimator was used.

We report all four costs even when optimizing Eq.~\eqref{eq:objective}. A single ``compression ratio'' is insufficient: it can conceal a regression in the always-loaded root or in the tail of \hl{path} cost. For metric $x$, the reduction is $1-C_x(B')/C_x(B)$; negative values are preserved rather than clipped.

\subsection{Two Deployment Lifecycles and Their Costs}
\label{sec:lifecyclecosts}

The four costs above describe one artifact, but deployment has two distinct lifecycles. \textbf{\hl{Persistent}} compression rewrites the canonical bundle; its one-time work reduces storage and future runs. \textbf{\hl{Transient}} compression leaves the canonical bundle unchanged and builds a task-specific execution view $\widehat{B}_{e,\pi}$ before a run through entry $e$. Its benefit applies only to that run and must be reported after build and cache cost.

Because the two change different things, we separate seven quantities and never average across them:
\begin{enumerate}
  \item \textbf{Canonical storage} $C_{\mathrm{disk}}(B')$: bytes of the on-disk bundle. Persistent shrinks it; transient keeps it equal to the source.
  \item \textbf{Shipped/deployment} $C_{\mathrm{dep}}(B')$ (Eq.~\eqref{eq:objective}): distributed bytes.
  \item \textbf{Per-entrypoint activation} $C_{\mathrm{act}}(B';e)$: entry tokens loaded when entry $e$ is selected.
  \item \textbf{Per-run execution-view tokens} $C_{\mathrm{view}}(\widehat{B}_{e,\pi})$: context loaded for one run through $e$ on task $\pi$.
  \item \textbf{Transient build latency} $\Lambda_{\mathrm{build}}(e,\pi)$: wall-clock time and any model calls to construct $\widehat{B}_{e,\pi}$.
  \item \textbf{Cache and invalidation} $C_{\mathrm{cache}}$: storing views keyed by (bundle digest, entry, environment version) and rebuilding them when any key changes.
  \item \textbf{Public-entrypoint independence} $\operatorname{Ind}(e,B')=1$ for every public $e$: a mandatory deployment constraint.
\end{enumerate}
Persistent results report quantities 1--3 and per-run load. Transient results report quantities 4--6, including build overhead, but never a disk ratio because disk is unchanged. Both lifecycles must satisfy constraint~7 for every public entry. They use the same kernel but differ in deletion scope, publication, auditing, and fallback (Section~\ref{sec:lifecycle}).


\section{Execution-Aware Compression Theory}
\label{sec:theory}

\subsection{From Repetition to Scoped Reuse}

Within one document, \base selects a shortest cover of typed contract units using primitive statements, shared rules, parameterized procedures, and explicit exceptions. For candidate representation $z$, let $d(z)$ be its token cost and $\Gamma(z)$ the units it covers. The file-level problem is a weighted set cover with hard coverage:
\begin{equation}
\min_{Z\subseteq\mathcal{Z}}\sum_{z\in Z}d(z)
\quad\text{s.t.}\quad
\bigcup_{z\in Z}\Gamma(z)\supseteq\contract_v.
\label{eq:filecover}
\end{equation}
\method retains this optimizer but changes candidate cost according to where the representation is placed in the bundle.

Suppose exact fragment $x$ of length $d_x$ occurs in files $S_x\subseteq V$. Keeping all copies incurs $\sum_{v\in S_x}w_vd_x$, where $w_v$ is the effective load weight induced by Eq.~\eqref{eq:objective}. Factoring $x$ into \kept{shared module} $h$ with reference cost $d_{\mathrm{ref}}$ costs
\begin{equation}
w_hd_x+\sum_{v\in S_x}w_vd_{\mathrm{ref}}+d_{\mathrm{audit}},
\label{eq:sharingcost}
\end{equation}
where $w_h$ is the probability-weighted scope of $h$, and $d_{\mathrm{audit}}$ covers import instructions and provenance. Factoring is allowed only if Eq.~\eqref{eq:sharingcost} is smaller.

\begin{proposition}[No sparse-root promotion]
\label{prop:sparseroot}
Let $x$ occur only in branches with total access probability $p<1$. Placing $x$ in the always-loaded root adds $(1-p)d_x$ expected tokens relative to keeping one copy within the affected \hl{activation scope}, before reference overhead. Therefore root promotion is suboptimal whenever the deployment saving is smaller than $(1-p)d_x/\lambda$ plus reference cost.
\end{proposition}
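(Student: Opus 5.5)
We compare two placements of the fragment $x$ under the objective $J$ of Eq.~\eqref{eq:objective}, holding the rest of the bundle fixed. In \emph{root promotion}, $x$ sits once in the root $r$. In \emph{scoped placement}, $x$ sits once in a shared module $h$ whose activation scope is exactly the union of the branches that originally contained $x$. Since both candidates differ only in where $x$ lives, we can compute $J$ term by term. The catalog cost $C_{\mathrm{cat}}$ is unchanged, so only the activation, expected-path, and deployment terms can differ.

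The first step is the runtime terms. Under root promotion, $x$ contributes $d_x$ to $C_{\mathrm{act}}$ on every run. No branch needs to load it again, so its total runtime contribution is $d_x$. Under scoped placement, $x$ lies outside the root. It is loaded through $\operatorname{load}(B',\pi)$ exactly on paths $\pi$ that enter an affected branch, an event of total probability $p$ under $q$. By linearity of expectation its contribution to $\E_{\pi\sim q}C_{\mathrm{path}}$ is $p\,d_x$; this matches $w_h d_x$ in Eq.~\eqref{eq:sharingcost} with $w_h=p$. The runtime difference is therefore $(1-p)d_x$, before counting the reference terms $\sum_{v\in S_x} w_v d_{\mathrm{ref}}+d_{\mathrm{audit}}$ that appear only in the scoped variant. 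This establishes the first claim.

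The second step is the deployment term. Both placements store one copy of $x$, so any deployment advantage of root promotion, call it $\Delta_{\mathrm{dep}}$, comes only from incidental savings. An example is removing the module file or its link stubs. This saving enters $J$ with weight $\lambda$. Root promotion lowers $J$ only if $\lambda\Delta_{\mathrm{dep}}$ exceeds $(1-p)d_x$ minus the reference overhead. Equivalently, root promotion is suboptimal whenever $\Delta_{\mathrm{dep}}<(1-p)d_x/\lambda$ plus the reference cost, the latter rescaled by $1/\lambda$ in the same way. This is the stated threshold, and since $p<1$ the threshold is strictly positive.

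The main obstacle is making the scoped baseline legitimate. It must be a faithful bundle that loads $x$ only on the affected branches. Two things have to be checked. First, the guards on edges into $h$ must reproduce the original applicability of every unit in $x$, so that coverage at a compatible scope holds. Second, $\operatorname{load}$ must not pull $h$ in on other paths. I would handle both by taking $h$ to be referenced only from the original branch files, with the original guards, which preserves the faithfulness conditions (i) and (iii). When the branches overlap, I would define $p$ as the probability of the union of branches rather than the sum of their probabilities, so that no path counts $x$ twice.
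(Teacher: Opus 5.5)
Your first step is the paper's argument. Load weight one for the root versus weight $p$ for the scoped module gives the runtime gap $(1-p)d_x$, and deployment can compensate only through $\lambda C_{\mathrm{dep}}$. The gap is the ``equivalently'' at the end of your second paragraph. You put the reference terms $R=\sum_{v\in S_x}w_vd_{\mathrm{ref}}+d_{\mathrm{audit}}$ only in the scoped variant, so your own accounting gives
\[
J_{\mathrm{root}}-J_{\mathrm{scoped}}=(1-p)d_x-R-\lambda\Delta_{\mathrm{dep}}.
\]
Hence root promotion is strictly worse exactly when $\Delta_{\mathrm{dep}}<\bigl((1-p)d_x-R\bigr)/\lambda$. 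That is what your own sentence ``root promotion lowers $J$ only if $\lambda\Delta_{\mathrm{dep}}$ exceeds $(1-p)d_x$ minus the reference overhead'' says. Rewriting it as $\Delta_{\mathrm{dep}}<(1-p)d_x/\lambda+R/\lambda$ flips the sign of the overhead. The rewritten form is not equivalent, and in your model it is not true. Whenever $R>0$ and $\Delta_{\mathrm{dep}}$ lies strictly between the two thresholds, root promotion strictly lowers $J$, yet your stated condition declares it suboptimal. Your remark that the threshold is strictly positive because $p<1$ also fails: the correct threshold is positive only if $(1-p)d_x>R$.

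The paper's one-line proof reaches the ``plus'' form by adding the nonnegative reference overhead to the amount root promotion must overcome. In other words, it charges pointer cost against the promoted placement, not against the scoped one. To prove the proposition as stated, you must adopt and justify such an accounting explicitly. One option is to take as the scoped baseline a placement that needs no new pointers, such as an existing file loaded exactly on the affected paths, and to charge any link that root promotion requires at load weight one. With the accounting you actually chose, what you have established is the weaker threshold $\bigl((1-p)d_x-R\bigr)/\lambda$, and you should state that instead. Your discussion of keeping the scoped baseline faithful, and of defining $p$ as the probability of the union of branches, is a sound refinement that the paper leaves implicit.
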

\begin{proof}
In Eq.~\eqref{eq:objective}, root placement gives load weight one, whereas branch-scoped placement gives weight $p$. Their path-cost difference is $(1-p)d_x$; deployment can offset it only through the $\lambda C_{\mathrm{dep}}$ term. Adding nonnegative reference overhead yields the stated condition.
\end{proof}

The proposition captures a common failure: global deduplication can reduce storage while making common requests more expensive. \method instead places a shared module within the activation scope of the branches that need it. The root imports it only when every reachable path requires it.

\subsection{Conditional Capsules}

A guarded section with an explicit trigger can be moved from an always-loaded file into an on-demand \guard{capsule}. Let the section body cost $d_b$, dispatcher cost $d_g$, and trigger probability $p_g$. Keeping the body inline costs $d_b$ per load; a capsule costs $d_g+p_gd_b$ in path expectation, plus $\lambda d_g$ deployment overhead.

\begin{proposition}[Capsule threshold]
\label{prop:capsule}
Moving a guarded section to a capsule decreases Eq.~\eqref{eq:objective} if
\begin{equation}
(1-p_g)d_b>(1+\lambda)d_g.
\label{eq:capsulethreshold}
\end{equation}
\end{proposition}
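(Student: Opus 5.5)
The plan is to compute the change in $J$ from Eq.~\eqref{eq:objective} directly, term by term, comparing the inline bundle $B$ with the capsule bundle $B'$. This mirrors the proof of Proposition~\ref{prop:sparseroot}. We assume that the guarded section sits in an always-loaded file, so it has load weight one, and that the catalog text is unchanged. Then $C_{\mathrm{cat}}$ contributes nothing to the difference, and the variation is confined to the activation/path terms and to the $\lambda C_{\mathrm{dep}}$ term. Throughout, the body counts once in whichever of $C_{\mathrm{act}}$ or $\E_\pi C_{\mathrm{path}}$ accounts for the host file; we fix that convention first so the root is not double-counted.

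\textbf{Runtime terms.} Inline, the body contributes $d_b$ to every run. After the rewrite, the host file instead carries the dispatcher, which contributes $d_g$ to every run. The capsule is loaded exactly on paths where the guard fires, so under $q$ it contributes $p_g d_b$ in expectation. We rely here on $\operatorname{load}$ following the unchanged harness: the capsule is read only when its guard holds, which is the purpose of the routing lock in Pillar~2. The net runtime change is therefore $d_g + p_g d_b - d_b$.

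\textbf{Deployment term.} The body bytes move rather than duplicate, so $C_{\mathrm{dep}}$ grows only by the dispatcher text, $d_g$. This adds $\lambda d_g$ to $J$. Summing the two parts gives
\[
\Delta J = d_g + \lambda d_g - (1-p_g) d_b .
\]
This quantity is negative exactly when Eq.~\eqref{eq:capsulethreshold} holds, which gives the claim.

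The main obstacle is modelling rather than algebra. We must justify that the capsule file adds no cost beyond $d_b$: any header or provenance text has to be either absorbed into $d_g$ or assumed zero. We must also justify that faithfulness is preserved, so the rewrite is feasible under the constraint. For the cost part, I would state explicitly that $d_g$ denotes all added text, including the dispatcher line and any capsule framing. For feasibility, I would argue that the rewrite is bundle-faithful because the capsule covers every unit of the section at the same guarded scope: the trigger probability $p_g$ equals the guard's original applicability, so each unit still applies on every path where it originally applied. If the section's host is not always loaded but has load weight $w<1$, the same computation gives the scaled condition $w(1-p_g)d_b > (w+\lambda)d_g$. I would note this as a remark rather than part of the proof.
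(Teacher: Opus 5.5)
Your argument is correct and is essentially the paper's. The paper gives no separate proof; the sentence before the proposition prices the inline body at $d_b$ per load and the capsule at $d_g+p_gd_b$ in path expectation plus $\lambda d_g$ of deployment, and comparing those two costs is exactly your $\Delta J$ computation. Your worry about double-counting the root is harmless. The condition forces $d_b>d_g$, so it remains sufficient even if the root is charged in both $C_{\mathrm{act}}$ and $C_{\mathrm{path}}$.
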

The condition favors long, infrequent branches with short and unambiguous dispatchers. \method never infers a new guard: a capsule candidate must originate from an explicit heading or conditional clause, and the dispatcher must preserve the original trigger and relative path.

Capsules and sharing address different costs. A capsule delays an infrequent branch; a shared module removes exact repetition across branches. When both apply, the shared module remains within the union of those branches rather than moving to the root.

\subsection{Host Entailment Under a Closed Contract}

Some skill text restates guarantees already enforced by the deployment environment: an available tool, an immutable output schema, or a mandatory safety policy. Removing such text can yield a high activation reduction, but only if the guarantee is explicit and stable. We represent the environment as a signed typed contract
\begin{equation}
H=\{(t,k,v,\sigma,\delta)\},
\end{equation}
where $t$ is unit type, $k$ a canonical key, $v$ the exact value, $\sigma$ its scope, and $\delta$ the environment digest. Removal requires an exact type/key/value/scope match; semantic similarity only flags manual review.

\begin{assumption}[Environment stability]
The audited environment digest remains unchanged between compression and deployment. If it changes, the bundle is re-audited or the entailment transformation is disabled.
\end{assumption}

Absent a supplied contract, host entailment is a no-op. This default avoids treating model knowledge, tool documentation, or informal conventions as guarantees.

\subsection{Interface Contracts and the Witness Hierarchy}
\label{sec:witnesses}

Typed coverage and host entailment do not fully protect two structures that matter in deployment: interface contracts and evidence for deletion.

\begin{definition}[Interface contract]
An \emph{interface contract} is a maximal source section that specifies the skill's inputs or outputs as a machine-checked format: an output schema with worked examples, a whitelist of legal labels, or a field-level validation rule. Interface contracts are \emph{atomic}: the specification is read as one unit by the consuming model, so its coverage obligation is discharged only when the whole section is emitted contiguously and verbatim. A rewriting that preserves every line but scatters the section across several synthetic headings does \emph{not} cover the unit.
\end{definition}

The second gap concerns \emph{how} a removal is justified. Coverage by itself is a claim; a witness is the evidence attached to it. Every removal \method commits carries exactly one, recorded in the audit state:
\begin{equation}
W_1 \;\succ\; W_2 \;\succ\; W_3,
\label{eq:witnesshierarchy}
\end{equation}
where $W_1$ is \emph{literal containment}: the removed text survives byte-for-byte at another reachable location, so the removal is reversible by construction; $W_2$ is a \emph{cover witness}: a merged or shared representation passes the deterministic content-word coverage gate while retaining every protected literal and every negation polarity; and $W_3$ is an \emph{entailment witness}: a frozen checker model (temperature $0$, fixed prompt) certifies that the decision-relevant content of the removed span---conditions, verdicts, thresholds, whitelists, exemptions---is already fully expressed elsewhere. $W_3$ verdicts are accepted only for evidence-class units that carry no prohibition, output, or exemption marker, and each accepted verdict is logged with the checker's stated basis, so every $W_3$ removal is individually auditable after publication. Any deletion that cannot attach a witness is refused, whatever the predicted saving. Two further rules keep $W_3$ honest when it is applied inside a single document: the entailment base excludes the candidate segment itself, and two candidates may not justify each other's removal---a deletion whose only witness is another deletion is vetoed, because each verdict may have relied on text that is itself about to disappear. Where placement costs differ by layer, $W_3$ should additionally be spent where the executor charges most: a removal from the always-loaded root saves its weight once per reasoning round, while a removal from an on-demand reference saves it about once.

The hierarchy makes the safety boundary testable: \emph{witness strength, not deletion count, sets the safe compression ceiling}. Section~\ref{sec:industrial} compares the same bundle under $W_1,W_2$ alone and with restricted $W_3$ witnesses.

\subsection{Bundle Faithfulness Invariant}

Let $\Phi$ be the ordered transformation sequence: resolve, extract, drop entailed units, factor exact structure, form capsules, apply file-level shortest cover, materialize, and audit.

\begin{proposition}[Phase-A preservation]
Under a safe source graph, exact extraction provenance, Assumption~1, interface-contract atomicity, and a sound final auditor, a committed output $B'=\Phi(B)$ is faithful according to Definition~2 and can be executed by any harness that already supports the original bundle's ordinary relative-file loading semantics.
\end{proposition}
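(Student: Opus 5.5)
The argument is an induction over the ordered pipeline $\Phi$, with the final audit serving as the commit gate. We track an invariant $\mathcal{I}(B_k)$ for every intermediate bundle $B_k$ produced by the first $k$ stages. $\mathcal{I}$ says that every source unit $a\in\contract_v$ is either covered in $B_k$ at a compatible scope or has been discharged by an exact typed witness in $H$. It also says that every locked node is byte-identical to its source, and that every interface contract still appears contiguously and verbatim. The base case is the resolve and extract stages. The safe-graph hypothesis (Definition~1) gives well-formed resources and references. Exact extraction provenance means each unit in $\contract_v$ points to a real source span, so the identity bundle trivially covers every unit at its original scope. This establishes $\mathcal{I}(B_0)$, since locked nodes are untouched and interface contracts are still the source sections.

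The inductive step handles each transformation separately and relies on the witness rule of Section~\ref{sec:witnesses}: no deletion is committed without a witness.

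\emph{Host entailment.} This step removes only units with an exact type/key/value/scope match in $H$, bound to the digest $\delta$. Assumption~1 then gives clause (iv) at deployment time.

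\emph{Exact factoring.} The guarantee here is $W_1$ containment. The shared module holds the bytes verbatim, and by Proposition~\ref{prop:sparseroot}'s placement rule it sits within the activation scope of every branch that used it. Together with the inserted reference, this keeps coverage on every original path.

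\emph{Capsules.} These preserve the original guard and relative path, so on any path where the section applied, the dispatcher loads the identical body.

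\emph{File-level shortest cover.} This step enforces the hard constraint in Eq.~\eqref{eq:filecover} through $W_2$ or restricted $W_3$ witnesses. Interface-contract atomicity is imposed as a hard constraint in this stage, so such sections are never scattered. Locked nodes are never rewritten by any stage, and inbound references to them are rewritten only when the canonical target is unchanged. This gives clause (ii).

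The main obstacle is that the per-stage witnesses are not closed under composition. One step's witness text, for example a shared module or an entailment basis, could be deleted or relocated by a later step. A $W_3$ verdict could also depend on a span that a later step removes. Trying to prove stability of witnesses stage by stage would require delicate non-interference lemmas. I would instead put the burden on the \emph{sound final auditor}. The auditor rereads the materialized $B'$ against the original $\contract$ and re-establishes the invariant. Soundness, read as ``the auditor accepts only if clauses (i)--(iv) and atomicity hold of the emitted directory,'' closes this gap directly. The earlier stages matter for liveness, meaning whether a commit happens at all, but not for correctness. Two rules from Section~\ref{sec:witnesses} keep this reliance honest: the ban on mutually justifying deletions, and the exclusion of the candidate segment from its own entailment base. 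Without them, an auditor could validate a circular witness chain. Because publication is atomic, any committed $B'$ is exactly an audited directory.

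Harness compatibility then follows from clause (iii) plus the construction of the output. $B'$ is an ordinary directory whose internal references all resolve safely to regular files under the bundle root. Capsule dispatchers and module imports are plain relative references carrying author-written guards. No new loading mechanism is introduced. So any harness that follows the original bundle's relative-file loading semantics computes $\operatorname{load}(B',\pi)$ with the same procedure, and by clause (i) it reaches a covering copy of every unit on every path where that unit originally applied.
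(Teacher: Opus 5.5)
Your proposal is correct and follows essentially the same route as the paper's proof sketch. Both rely on stage-by-stage preservation: digest-bound host witnesses, scoped sharing behind reachable references, guard-preserving capsule dispatchers, the cover constraint of Eq.~\eqref{eq:filecover}, contiguous interface contracts, and byte-identical locked nodes. Both then use the from-disk audit with atomic publication as the commit gate, and ordinary relative links for harness compatibility.

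You make explicit what the paper leaves implicit: per-stage witnesses need not compose, so correctness ultimately rests on auditor soundness, and the stage arguments only explain why a commit happens at all. One small correction: Proposition~\ref{prop:sparseroot} is a cost inequality that motivates in-scope placement; it is not itself the placement rule you cite.
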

\begin{proof}[Proof sketch]
Host deletion retains a digest-bound witness. Scoped sharing replaces covered units with a reachable reference in the same activation scope. Capsule extraction preserves the explicit guard in a dispatcher, and file-level compression obeys Eq.~\eqref{eq:filecover}. Interface contracts remain contiguous, while locked nodes are copied byte-for-byte. The disk audit rejects uncovered units, unsafe or dangling paths, scope expansion, scattered contracts, and changed locked bytes. Every committed candidate thus meets Definition~2 and runs through the source model's ordinary files and relative paths, with no added runtime mechanism.
\end{proof}

As with any static natural-language verifier, the guarantee depends on the extractor and auditor. We reduce this trusted surface by retaining source spans, preferring exact transformations, auditing emitted files rather than an in-memory plan, and falling back to a byte-identical copy on uncertainty.

\subsection{Continual Compression}

For an incoming evolution patch $\Delta_t$, Continual Bundle Compression identifies an affected closure $A_t$: changed nodes, reference ancestors, newly or formerly referenced descendants, shared modules whose support set intersects the patch, capsules whose guards changed, and host witnesses invalidated by a new environment digest. Only this closure is re-extracted and locally re-optimized; unchanged contracts are reused by source digest. The graph-update cost is
\begin{equation}
O(|A_t|+|E(A_t)|)+\operatorname{LM}(T_t),
\label{eq:incrementalcost}
\end{equation}
where $T_t\subseteq A_t$ are changed textual nodes requiring extraction. A global audit remains $O(|V_t|+|E_t|)$ in the conservative implementation, but uses hashing and cached contracts rather than new language-model calls.

Local decisions may drift from the one-shot optimum as support sets and path frequencies evolve. Define continual regret at checkpoint $t$ as
\begin{equation}
R_t=\frac{J(B_t^{\mathrm{cont}})-J(B_t^{\mathrm{one}})}
{\max(1,J(B_t^{\mathrm{one}}))},
\label{eq:continualregret}
\end{equation}
where $B_t^{\mathrm{one}}$ is a full one-shot recompression of the same verbatim authored state. A triggered global repack resets placement debt. The trigger is conservative: recoverable saving above $\theta_{\mathrm{repack}}$, relative bundle growth above $\rho$, workload drift above $\delta_W$, an environment-digest change, or a maximum of $K$ patches. This gives a tunable continuum between full recompression after every write and cheap local updates.

Faithfulness does not depend on the schedule: both modes materialize and audit a complete directory. If continual optimization fails, the system publishes the verbatim patched bundle, not the previous compressed version. Compression may lose savings, but it cannot discard the patch.

\section{SkillZip Pro}
\label{sec:method}

Figure~\ref{fig:overview} summarizes \method. Like \base, it is evaluation-free and seeks the shortest faithful representation. The difference is scope: \base compiles one \texttt{SKILL.md}, whereas \method compiles a progressively loaded directory as a typed resource graph. It supports both full One-Shot optimization and state-reusing Continual updates.

The compiler removes linked-file content already supplied by the root or environment, then verifies every route. One-Shot or Continual determines when optimization runs; Persistent or Transient determines whether output replaces the canonical bundle or forms a per-run view.

\begin{figure*}[t]
  \centering
  \includegraphics[width=0.99\textwidth]{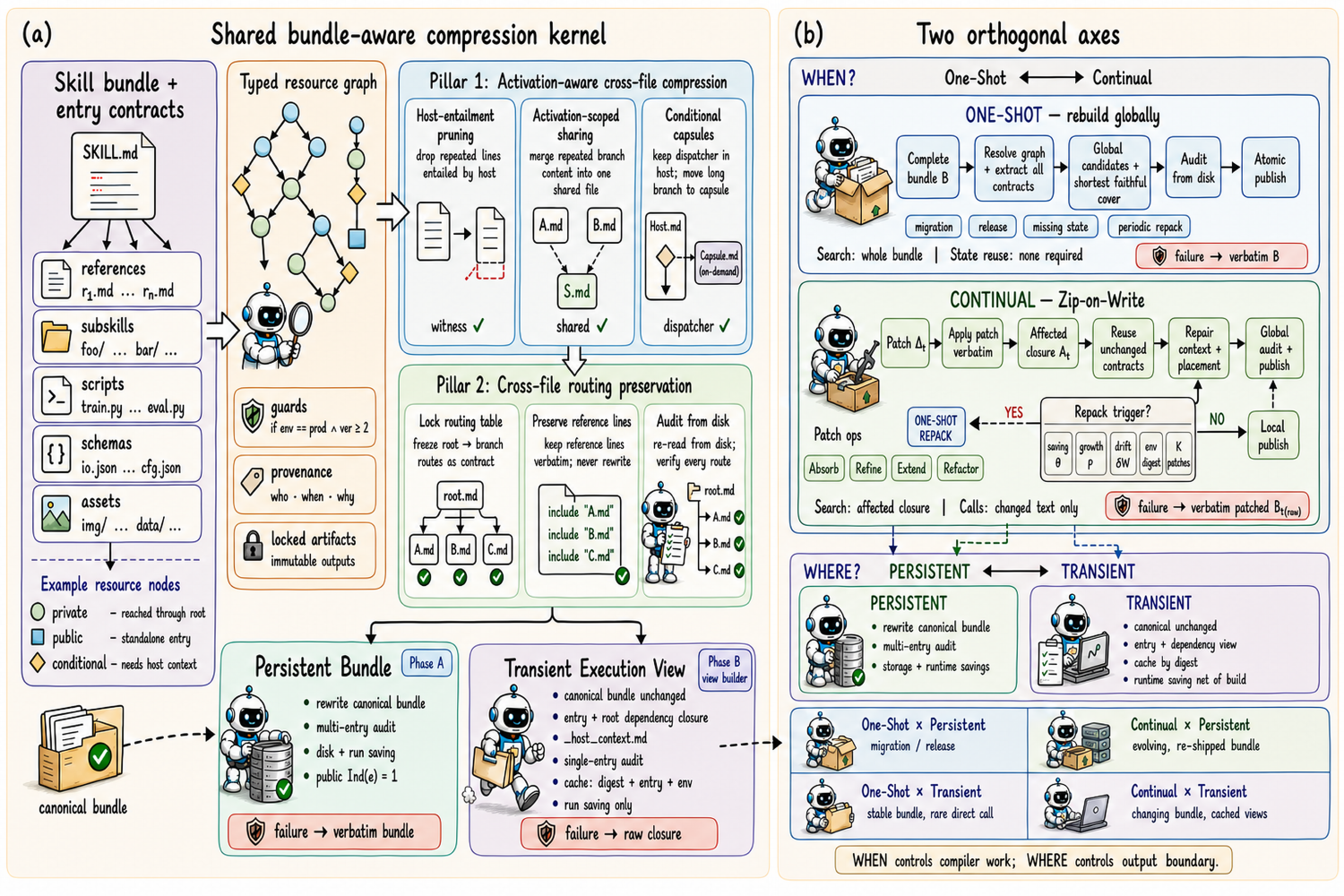}
  \caption{\textbf{Overview of SkillZip Pro.} \textbf{(a) Compression modes.} The bundle-aware kernel compresses a typed resource graph of skills, tools, prompts, policies, metadata, and inter-skill relationships while preserving routing, dependency, entry, and execution contracts. It supports global One-Shot compression, which builds a compact bundle in one pass, and state-reusing Continual compression, which incrementally updates an existing state as resources, tasks, or feedback change. \textbf{(b) Bundle forms.} Each mode produces either a Persistent Bundle for storage and repeated reuse, or a Transient Execution View for lightweight, task-specific execution. This separation balances compression, efficiency, adaptability, and contract fidelity.}
  \label{fig:overview}
\end{figure*}

\subsection{From SkillZip to SkillZip Pro}

Compared to \base, \method adds graph-wide placement, accounting, and audits.


\subsection{The Two Pillars}
\label{sec:pillars}

The extension has two goals. \textbf{\textcolor{SkillZipBlue}{Pillar~1 (compress across files)}} chooses what to remove and where to place what remains. \textbf{\textcolor{PreprintGreen}{Pillar~2 (preserve routing)}} ensures that these rewrites do not sever the links used for progressive loading. The first creates savings; the second makes them safe to deploy.

\textbf{\textcolor{SkillZipBlue}{Pillar 1 --- activation-aware cross-file compression.}} Three graph-level transformations, none available to a single-document compressor, act on top of the per-file optimizer. (i)~\emph{Host-entailment pruning}: a unit whose obligation is exactly implied by the root or by a declared environment contract is removed, with a digest witness recorded, because the agent already receives it. (ii)~\emph{Activation-scoped sharing}: text repeated across branches is factored into one \kept{shared module}, but placed \emph{only within the activation scope that loads it}---never promoted into the always-loaded root, which would raise per-run cost. (iii)~\emph{Conditional capsules}: a long guarded branch moves into an on-demand file, leaving a one-line dispatcher. Together, these transformations remove redundancy between the root and dynamically loaded files.

\textbf{\textcolor{PreprintGreen}{Pillar 2 --- cross-file routing preservation.}} The same rewriting is dangerous precisely because a routing line---``when the task matches $X$, read \texttt{[X](refs/x.md)}''---looks like ordinary prose but is the skill's \win{navigation table}. \method protects it with three layers of defense. (i)~The routing table is \guard{locked} as a first-class unit, so the optimizer may neither reword nor merge it. (ii)~Every reference-bearing source line is preserved through rendering, so a link cannot be dropped as a side effect of compressing the text around it. (iii)~Before publishing, an independent pass re-reads the materialized directory and verifies that \emph{every} file---including shared modules and capsules the compressor itself created---is still \win{reachable} from the root by following resolved links; if any branch became unreachable, the candidate is rejected and the verbatim bundle is shipped instead. Section~\ref{sec:loadingresults} measures the result: \method keeps $1.000$ routing fidelity while root-only and flat compressors drop to $0.000$.

\subsection{Two Deployment Lifecycles for the Kernel Output}
\label{sec:lifecycle}

Together, the pillars form a compression kernel that rewrites files while preserving their routes. A separate lifecycle choice determines where the result lives. This choice is independent of the One-Shot/Continual schedule in Sections~\ref{sec:oneshot}--\ref{sec:continual}: the schedule controls when the kernel runs, while the lifecycle controls whether its output replaces the shipped bundle.

\textbf{\hl{Persistent Bundle Compression}} runs the kernel on the canonical bundle and ships the smaller bundle in its place. It suits private references and subskills that are only ever reached from the root: their repeated text is covered elsewhere, so it can be removed once and for all, cutting both the shipped size and every future run that would have loaded it. Because the shipped files change, the kernel may touch a public entry only after a \emph{multi-entry audit} re-reads the published directory from \emph{every} declared public entry and confirms that each still routes correctly and keeps $\operatorname{Ind}=1$ on its own; if any check fails, the verbatim bundle is shipped. This is the lifecycle of every earlier section, and we label it as such from here on.

\textbf{\hl{Transient Execution-View Compression}} leaves the canonical bundle byte-for-byte unchanged and builds a throwaway \emph{execution view} $\widehat{B}_{e,\pi}$ before a run. For selected entry $e$, the view combines the closures of the root, $e$, and their shared dependencies, compresses them with the same kernel, and re-roots the result at $e$. The original root remains linked as \texttt{\_host\_context.md}, so host obligations survive. The view is discarded or cached by $(\text{bundle digest},\,e,\,\text{environment digest})$ until a key changes. It lowers run context, not disk use. Algorithm~\ref{alg:transient-view} summarizes the transient compression. If compression does not improve the view or its single-entry audit fails, the system loads the uncompressed closure. 


\begin{algorithm}[t]
\caption{Transient Execution-View Construction}
\label{alg:transient-view}
\begin{algorithmic}[1]
\Require canonical bundle $B$ with root $r$, chosen entry $e$, optional environment $H$, cache flag
\Ensure execution view rooted at $e$; canonical $B$ unchanged
\State $k\gets\Call{ViewKey}{\operatorname{digest}(B),e,\operatorname{digest}(H)}$
\If{cache \textbf{and} $\Call{Hit}{k}$}
  \State \Return $\Call{LoadCached}{k}$ \Comment{no kernel call}
\EndIf
\State $\Omega\gets\Call{Reach}{B,e}\cup\Call{Reach}{B,r}$ \Comment{entry and root closures}
\State $R\gets\Call{MaterializeClosure}{B,\Omega,e}$ \Comment{promote $e$ to root; link $r$ as host context}
\State $\widehat{B}\gets\Call{Kernel}{R}$ \Comment{Pillars~1--2, cross-file promotion on}
\If{$\widehat{B}$ unusable \textbf{or} $\tau(\widehat{B})\ge\tau(R)$ \textbf{or} single-entry audit fails}
  \State $\widehat{B}\gets R$ \Comment{safe fallback: uncompressed closure}
\EndIf
\State \Return $\Call{Cache}{k,\widehat{B}}$ \Comment{verify $\operatorname{digest}(B)$ unchanged}
\end{algorithmic}
\end{algorithm}

Because \emph{when} to compress and \emph{whether to keep the result} are independent, they combine into the four production modes of Table~\ref{tab:lifecycle-matrix}. One-shot persistent is the initial migration of an evolved library into a shipped bundle; continual persistent keeps that bundle small as it is edited and re-shipped; one-shot transient serves a stable bundle whose entries are called directly and rarely; and continual transient serves high-frequency direct calls against a bundle that keeps changing, where digest-keyed caching amortizes the build over many runs. 


\begin{table*}[t]
\centering
\caption{\textbf{The two axes are orthogonal.} Update frequency (One-Shot vs.\ Continual) decides \emph{when} the kernel runs; lifecycle (Persistent vs.\ Transient) decides \emph{whether its output replaces the shipped bundle}. The four combinations target different deployments.}
\label{tab:lifecycle-matrix}
\resizebox{\textwidth}{!}{%
\begin{tabular}{llp{0.30\textwidth}p{0.28\textwidth}p{0.26\textwidth}}
\toprule
Lifecycle & Compression mode & Best for & What it saves & System requirement \\
\midrule
\multirow{2}{*}{\textbf{Persistent}} & One-Shot & initial migration of an evolved library to a shipped bundle & smallest canonical bundle and lowest steady-state run cost & one multi-entry audit before publishing \\
 & Continual & a library that keeps evolving and is re-shipped over time & each edit is repacked once and amortised over many runs & re-audit on every write plus a periodic global repack \\
\midrule
\multirow{2}{*}{\textbf{Transient}} & One-Shot & a stable bundle whose entries are called directly and rarely & per-run context saving with the canonical bundle untouched & a view build (or cache read) before each run \\
 & Continual & high-frequency direct calls against a frequently edited bundle & views are cached by bundle digest and reused across runs & digest-keyed cache with invalidation on any edit \\
\bottomrule
\end{tabular}}
\end{table*}

\subsection{Shared Inputs and Persistent Compiler State}

The required input is a bundle root and its entry file. Optional inputs are a deployment tokenizer or cost callback, a workload ledger of task-class frequencies and observed resource loads, and a typed, digest-bound environment contract. Without traces, explicit branch structure estimates path weights. Without an environment contract, no host-entailment deletion is attempted. Both modes share a persistent compressor state
\begin{equation}
M_t=\langle G_t,\contract_t,W_t,H_t,\mathcal{I}_t,\mathcal{P}_t,\mathcal{A}_t\rangle,
\label{eq:compilerstate}
\end{equation}
where $G_t$ is the safe resource graph; $\contract_t$ the per-node contract store; $W_t$ the path-weight ledger; $H_t$ the environment digest and witnesses; $\mathcal{I}_t$ candidate, support, and reverse-reference indices; $\mathcal{P}_t$ provenance; and $\mathcal{A}_t$ prior audit results. The sidecar accelerates future compression but is never required at execution. If it is absent or its digest is stale, the tool safely falls back to one-shot reconstruction.

\subsection{Shared Bundle Compiler}

\textbf{Safe graph resolution.}
The scanner walks the directory without following escaping symbolic links and classifies each node as instructional Markdown, subskill entry, code, structured data, or opaque asset. It recognizes local Markdown links, explicit path literals with known extensions, and declarative subskill references. Every internal edge records its exact source span and nearest explicit guard. External URLs are preserved but never fetched; missing, ambiguous, or escaping local targets fail strict mode. Unreferenced files remain in deployment cost and are never silently discarded.

\textbf{Typed contract extraction.}
Each instructional node is segmented by headings and source spans. One structured extraction recovers Eq.~\eqref{eq:contract}, while every unit retains provenance. Relations such as \texttt{same\_rule}, \texttt{same\_workflow}, and \texttt{exception\_of} are accepted only after deterministic type, polarity, guard, and payload checks. Low-confidence units are locked and preserved verbatim. The original \base file-level scanner, type-compatible reuse, minimum-cost cover, fixed-template render, and span-restoration audit operate here unchanged for each eligible file.

\textbf{Bundle-level candidate generation.}
Three transformations expose savings unavailable inside a single file. (1) Exact contract units entailed by $H_t$ may be removed only with a scope-compatible, digest-bound witness. (2) Repeated exact rules or workflow fragments may move to \texttt{.skillzip\_shared/<digest>.md} at their lowest safe activation scope; affected branches receive a mandatory relative loading instruction. (3) A long section with an explicit guard may move to \texttt{capsules/<slug>.md}; a dispatcher retaining the trigger remains at the original site. Each transformation is evaluated together with per-file covers under Eq.~\eqref{eq:objective}.

\textbf{Global selection and materialization.}
Because a capsule changes path weights and a shared module may serve several capsules, marginal selection is followed by local add/drop/swap improvement. Every move preserves modeled coverage. Materialization writes an ordinary relative-path directory; code, data, schemas, and assets remain byte-identical. Interface contracts (Definition~3) are restored as one contiguous verbatim span: surrounding prose may shrink, but the optimizer cannot split, reorder, or paraphrase the contract. A post-pass removes scattered render fragments and restores the source span. Every removal records its witness class from Eq.~\eqref{eq:witnesshierarchy}; $W_3$ removals also store the verdict and rationale.

\subsection{Mode I: \hl{One-Shot} Bundle Compression}
\label{sec:oneshot}

One-shot mode is intended for initial migration, release packaging, a missing or invalid state sidecar, or a periodic global re-optimization after workload drift. It reconstructs $M_t$ from the complete source bundle and searches globally. Algorithm~\ref{alg:skillzippro-oneshot} never publishes a candidate whose materialized objective is worse than the source.

\begin{algorithm}[t]
\caption{SkillZip Pro: One-Shot Bundle Compression}
\label{alg:skillzippro-oneshot}
\begin{algorithmic}[1]
\Require source bundle $B$, entry $r$, optional $H,Q$
\Ensure audited bundle $B^{\star}$ and state $M$, or verbatim $B$
\State $G\gets\Call{ResolveSafeGraph}{B,r}$
\State $\contract\gets\Call{ExtractTypedContracts}{G}$
\State $W\gets\Call{EstimateLoadWeights}{G,Q}$
\State $Z\gets\Call{FileCoverCandidates}{\contract}$
\State $Z\gets Z\cup\Call{HostEntailments}{\contract,H}$
\State $Z\gets Z\cup\Call{ScopedSharing}{G,\contract,W}$
\State $Z\gets Z\cup\Call{GuardedCapsules}{G,\contract,W}$
\State $P\gets\Call{ConstrainedSelect}{Z,J,\operatorname{cover}}$
\State $D\gets\Call{MaterializeTemporary}{B,P}$
\If{$J(D)\le J(B)$ \textbf{and} $\Call{AuditFromDisk}{B,D,H}$}
  \State \Return $\Call{AtomicCommit}{D},\Call{SaveState}{D}$
\Else
  \State \Return $\Call{VerbatimCopy}{B},\Call{FailureState}{B}$
\EndIf
\end{algorithmic}
\end{algorithm}

\subsection{Mode II: \hl{Continual} Bundle Compression}
\label{sec:continual}

Continual mode targets self-evolving skills that receive frequent small patches. Let $\Delta_t$ contain added, modified, moved, and deleted paths. The tool first applies $\Delta_t$ without paraphrase to the current authored source, producing the semantic fallback $B_t^{\mathrm{raw}}$. It then updates reference edges and constructs an invalidation closure $A_t$ containing changed nodes, their reference ancestors, newly or formerly referenced descendants, shared modules whose support changed, capsules whose guard changed, and any host witness invalidated by an environment digest change.

Only changed textual nodes are re-extracted; digest-identical contracts are reused. Patch units retain the four operations of \base: \textsc{Absorb} removes an exact duplicate already covered; \textsc{Refine} updates an existing rule, guard, argument, or exception; \textsc{Extend} adds genuinely new behavior or a new resource; and \textsc{Refactor} changes a representation when several edits make another cover shorter. Pro extends \textsc{Refactor} to placement: affected shared modules, capsules, and host witnesses are re-priced whenever their support set or path weight changes.

Local repair cannot accumulate unbounded debt. A one-shot global repack is triggered when recoverable objective saving exceeds $\theta_{\mathrm{repack}}$, bundle growth exceeds $\rho$, the workload distribution drifts beyond $\delta_W$, the environment digest changes, or $K$ patches have elapsed. Repacking reads the current bundle and compact state, never the full patch history.

\begin{algorithm}[t]
\caption{SkillZip Pro: Continual Bundle Compression}
\label{alg:skillzippro-continual}
\begin{algorithmic}[1]
\Require authored source $S_{t-1}$, audited state $M_{t-1}$, patch $\Delta_t$
\Ensure current faithful bundle $B_t$ and state $M_t$
\State $S_t,B_t^{\mathrm{raw}}\gets\Call{ApplyPatchVerbatim}{S_{t-1},\Delta_t}$
\State $G_t,A_t\gets\Call{UpdateGraphAndClosure}{M_{t-1}.G,S_t,\Delta_t}$
\State $\contract_t\gets\Call{ReuseAndReextract}{M_{t-1}.\contract,A_t}$
\State $U_t\gets\Call{Classify}{\textsc{Absorb},\textsc{Refine},\textsc{Extend},\textsc{Refactor}}$
\State $Z_t\gets\Call{RefreshAffectedCandidates}{A_t,U_t,M_{t-1}}$
\If{$\Call{RepackTriggered}{M_{t-1},S_t,Z_t}$}
  \State \Return $\Call{OneShot}{S_t}$
\EndIf
\State $P_t\gets\Call{RepairLocalCoverAndPlacement}{Z_t,J}$
\State $D_t\gets\Call{MaterializeTemporary}{B_t^{\mathrm{raw}},P_t}$
\If{$J(D_t)\le J(B_t^{\mathrm{raw}})$ \textbf{and} $\Call{AuditFromDisk}{S_t,D_t,H_t}$}
  \State \Return $\Call{AtomicCommit}{D_t},\Call{SaveState}{D_t}$
\Else
  \State \Return $\Call{AtomicCommit}{B_t^{\mathrm{raw}}},\Call{RebuildState}{S_t}$
\EndIf
\end{algorithmic}
\end{algorithm}

The fallback distinction is essential. A failed one-shot run may return its input. A failed continual run cannot reactivate $B_{t-1}$ after a valid patch, which would discard learned behavior; it publishes $B_t^{\mathrm{raw}}$ and records the failure for retry.

\subsection{Cross-File Audit and Mode Selection}

The auditor ignores in-memory coverage claims and rereads the emitted directory. It checks (i) graph closure and path confinement; (ii) typed coverage at a compatible scope or by an exact host witness; (iii) mandatory reachability and guard preservation for every capsule and shared module; (iv) path and SHA-256 identity for locked artifacts; (v) that every interface contract appears as one contiguous span whose normalized lines equal the source section, with no scattered fragments elsewhere; and (vi) all four materialized costs. Atomic publication occurs only after audit succeeds.



\subsection{Harness-Agnostic Deployability}

Both modes remain agent harness-agnostic. The compressor may persist state, observe offline traces, and run on every write, but the published bundle does not depend on it. The agent selects the same skill, loads the same root entry, follows ordinary relative links, and executes byte-identical artifacts. 


\section{Experiments}
\label{sec:experiments}

We design our experiments around three goals: determining whether \method reduces both deployment and progressively loaded runtime costs, verifying that these savings preserve task quality, routing, and the independent usability of public entries, and assessing whether its four operating modes remain practical as skills evolve. We evaluate these questions on three agent benchmarks, real self-evolving skill libraries, a controlled multi-entry bundle, and a production content-moderation skill. Comparisons with root-only, flattened, evaluation-guided, and expert-designed baselines are complemented by ablations and workload sweeps that isolate the contribution and cost of each design component.


\subsection{What We Ask}

\begin{itemize}
  \item \textbf{Q1: Is one ratio enough?} When does a root-only ratio misrepresent the four cost layers?
  \item \textbf{Q2: Does it help?} Does \method cut all four costs while keeping task success?
  \item \textbf{Q3: Does it still load correctly?} Does the bundle open required files and skip irrelevant ones?
  \item \textbf{Q4: Is the saving real?} How much source knowledge remains reachable rather than deleted?
  \item \textbf{Q5: Does it port across models?} Does one compressed bundle stay useful when a different model family runs it?
  \item \textbf{Q6: What resources does compression consume?}
  \item \textbf{Q7: Which part does the work?} How much comes from the resource graph, host entailment, scoped sharing, capsules, and the audit?
  \item \textbf{Q8: One-shot or continual?} How much update work does continual mode save, how far does it drift from a full rebuild, and when should it repack?
  \item \textbf{Q9: Does it hold in production?} On a deployed skill, does witnessed compression preserve quality, and do modeled savings match runtime measurements?
\end{itemize}

\subsection{Benchmarks and Skill Bundles Construction}

We evaluate \method on three task families: \textbf{BFCL-v4} for tool use~\cite{patil2025bfcl,bfcl2026v4}, \textbf{LiveMathematicianBench} for mathematical problem solving~\cite{he2026livemath,he2026livemathematicianbench}, and \textbf{SpreadsheetBench} for spreadsheet editing~\cite{ma2024spreadsheetbench}. These benchmarks require different kinds of instructions: BFCL-v4 depends on correct tool names and call order, LiveMathematicianBench requires detailed reasoning and verification, and SpreadsheetBench requires precise cell values and formatting.

The skill bundles are generated through self-evolution in our experiments. For each benchmark, we run the skill evolution optimizer \textsc{SkillOpt}~\cite{yang2026skillopt} separately on every task class. The optimizer proposes one edit at a time and retains it only when it improves performance on that class. Each task therefore develops a specialised skill containing its own procedures, checks, and failure experience. These skills are not interchangeable, making routing important: the agent must identify and load the file associated with the current task class. From the resulting skills, we replay evolution patches into separate named branches. A deterministic builder records the source of every moved section and preserves its text verbatim. This process reproduces the structure that develops during continued self-evolution: common output rules, verification checklists, and records of previous errors are copied into multiple branches, while one branch accumulates a longer set of guarded edge cases.



We divide tasks into non-overlapping evolution, validation, and held-out test sets. Skill construction and compression use only the data assigned to evolution; the held-out test set is used exclusively for final evaluation. Every compression method receives the \emph{same input bundle} and the \emph{same environment}.

\subsection{Models and Setup}

We use the same three model families as the earlier manuscript: Qwen3.7-Max, Qwen3.6-Plus, and Kimi K2.6~\cite{qwen2026qwen37,qwen2026qwen36,moonshot2026kimik26}. Qwen3.6-Plus runs the main tables, and all three serve as executors in the portability study. Sampling settings, tools, system prompt, and timeouts are fixed per benchmark. Deterministic compression needs one run per bundle; repeats are byte-identical.

The executor is the unmodified benchmark agent. It sees catalog metadata first, then the root once the skill is selected. Other files appear only after an ordinary file read, and a file can be opened only if something already loaded links to it. A wrapper records which paths were opened and how many tokens were read; it never injects, hides, or reorders content.

\subsection{Compared Conditions and Baselines}

We compare \method against two reference conditions and four compression baselines. Unless stated otherwise, all compression methods receive the same evolved bundle, and all reported savings are computed relative to the uncompressed.

\paragraph{Reference conditions.}
\textbf{No Skill} runs the agent without reusable instructions and provides a lower reference for task performance. \textbf{Human Skill} uses the original hand-written skill before self-evolution. \textbf{Evolved Bundle} is the complete, uncompressed output of the evolution process and serves as the primary reference for both fidelity and cost.

\paragraph{Compression baselines.}
\textbf{Root-only SkillZip} applies \base only to the root \texttt{SKILL.md} and leaves all referenced resources unchanged, representing single-document compression in a multi-file setting. \textbf{Flat-concat SkillZip} concatenates all textual resources with path delimiters, compresses the resulting document, and then maps the output back to files; it tests the effect of ignoring progressive-loading boundaries. \textbf{SkillReducer} is the evaluation-guided skill compressor. To match our no-rollout compression budget, we disable its evaluation-based candidate selection and retain only its compression stage. \textbf{Expert Progressive} is a deterministic, structure-aware baseline that factors paragraphs repeated across two or more files into a shared resource referenced from the root.

\subsection{Reading the Baselines: One Warning First}
\label{sec:warning}

One baseline needs a warning before any number is read, because it otherwise looks like the strongest method in the paper. \textbf{Flat-concat \base{} posts the largest savings of any method and also throws away almost the entire skill.} On the real evolved libraries of Section~\ref{sec:evolved} it keeps \textbf{\emph{0.2\%}} of the skill's instruction lines: pasting every file together and compressing the result deletes the text and destroys the links that made the files reachable. Its savings are therefore \emph{not comparable} with the other rows, and we mark it with $\dagger$ wherever it appears. We keep it in the tables on purpose: it is the clearest demonstration that \textbf{\emph{a compression ratio means nothing until fidelity is measured next to it}}.

Root-only \base{} needs a smaller warning. It only rewrites \texttt{SKILL.md}, so it looks harmless, but that one file holds the routing list. Rewriting it keeps every file on disk while making the branches unreachable in practice, which is why its routing score is $0.000$ everywhere.



\subsection{Metrics}

\textbf{Task success.} Each benchmark's own automatic checker: boxed answer plus symbolic equality for math, normalized match against accepted answers for BFCL with its offline search tool available, and cell-by-cell workbook comparison for SpreadsheetBench, where a task passes only if every one of its test cases matches. For each method we also test whether it stays as good as the uncompressed bundle: we pool the held-out tasks of all three benchmarks, pair every method against the uncompressed bundle task by task, and bootstrap the difference in success rate (10{,}000 resamples). A method \emph{keeps quality} when the lower end of the 95\% interval stays above a five-point margin fixed before we looked at the numbers.

\textbf{Cost.} We measure the four layers of Section~\ref{sec:problem}---catalog, activation, deployment, and loaded path---plus objective $J$ from Eq.~\eqref{eq:objective}. Negative savings remain visible.

\textbf{Loading.} Whether the agent opened the file that specialises in the task (required recall), what share of opened files were not that file (irrelevant load), how many links are broken or point outside the bundle, and how many files became unreachable from the root.

\textbf{Knowledge kept.} We count the source instruction lines still present, possibly lightly reworded, in reachable files. Environment-guaranteed removals are recorded separately rather than counted as loss; each has a signed audit witness.

\textbf{Run cost.} Compression time, model calls, tokens, peak memory, and rollouts, measured end to end.

\subsection{Main Result: Task Success and the Four Costs}

Tables~\ref{tab:main-quality} and~\ref{tab:main-cost} must be read together, so that no method looks good on quality alone or on compression alone.

\begin{table}[!t]
\centering
\caption{Task success on held-out tasks, using the unmodified agent (Qwen3.6-Plus, grown version). Higher is better.}
\label{tab:main-quality}
\resizebox{\columnwidth}{!}{%
\begin{tabular}{lcccc}
\toprule
Method & BFCL~$\uparrow$ & Math~$\uparrow$ & Sheet~$\uparrow$ & Avg.~$\uparrow$ \\
\midrule
No Skill & 0.809 & 0.364 & 0.312 & 0.495 \\
Human Skill & 0.905 & 0.455 & 0.333 & 0.564 \\
Evolved Bundle & 0.905 & 0.364 & 0.354 & 0.541 \\
Root-only \base & 0.905 & 0.333 & 0.271 & 0.503 \\
Flat-concat \base & 0.857 & 0.394 & 0.167 & 0.473 \\
SkillReducer & 0.857 & 0.394 & 0.271 & 0.507 \\
Expert Progressive & 0.905 & 0.333 & 0.312 & 0.517 \\
\rowcolor{SkillZipPaleBlue}\method~(One-Shot) & 0.952 & 0.394 & 0.333 & 0.560 \\
\bottomrule
\end{tabular}}
\end{table}

\begin{table}[!t]
\centering
\caption{Four cost layers on three benchmarks. Cells report tokens and reduction from the Evolved Bundle; the agent is unchanged.}
\label{tab:main-cost}
\resizebox{\columnwidth}{!}{%
\begin{tabular}{lcccc}
\toprule
Method & Shipped~$\uparrow$ & Always loaded~$\uparrow$ & One run, avg~$\uparrow$ & One run, worst~$\uparrow$ \\
\midrule
Human Skill & 328/+87.7\% & 328/-51.4\% & 328/+43.3\% & 328/+54.3\% \\
Evolved Bundle & 2649/0\% & 212/0\% & 588/0\% & 755/0\% \\
\rowcolor{SkillZipPaleGrey}\color{LossyGrey}Root-only \base\,\xmark & 2589/+2.2\% & 153/+27.8\% & 529/+10.3\% & 696/+8.2\% \\
\rowcolor{SkillZipPaleGrey}\color{LossyGrey}Flat-concat \base\,\xmark & 954/+63.8\% & 120/+42.9\% & 183/+68.9\% & 249/+67.3\% \\
SkillReducer & 2518/+5.0\% & 213/-0.4\% & 564/+4.1\% & 722/+4.4\% \\
Expert Progressive & 2285/+13.7\% & 226/-6.5\% & 482/+17.9\% & 754/+0.2\% \\
\rowcolor{SkillZipPaleBlue}\method~(One-Shot) & 2112/+19.7\% & 196/+7.3\% & 500/+14.3\% & 669/+10.8\% \\
\bottomrule
\end{tabular}}
\end{table}

Three things stand out. First, a single ratio really can mislead: Root-only \base takes almost nothing off the shipped size yet shortens the always-loaded root a lot, while Human Skill looks like the best "compressor" by shipped size and at the same time makes the always-loaded root \emph{longer}. Second, Flat-concat \base posts by far the biggest numbers in both shipped size and per-run cost -- and Sections~\ref{sec:keptresults} show those numbers are paid for by throwing away the skill. Third, \method is the only method that cuts all four layers at once while matching the uncompressed bundle on task success. Figure~\ref{fig:costlayers} shows the per-layer picture.

On BFCL every method scores high because the offline search tool does much of the work; the skill still helps (no skill $0.809$, uncompressed $0.905$), and \method reaches the top score of any method here (\hl{$0.952$}), but the gaps are a few questions wide on twenty-one held-out questions, so we lean on the pooled test below rather than this single column. On SpreadsheetBench, where a task passes only if \emph{every} one of its test cases reproduces the gold range, \method scores \win{$0.333$} against \emph{$0.312$} for the hand-built expert baseline and sits one task in forty-eight below the uncompressed bundle -- we sample that benchmark at twice the density of the others precisely because its strict verdict makes single-task flips dominate at small $n$. Table~\ref{tab:qstats} runs the pooled test.

\begin{table}[!t]
\centering
\caption{Does the compressed bundle stay as good as the uncompressed one? Pooled over all \textbf{102} held-out tasks (33 math, 21 BFCL, 48 spreadsheet), paired task by task against the Evolved Bundle, 10{,}000 bootstrap resamples. A method keeps quality (\cmark) when the 95\% interval stays above the $-0.05$ margin fixed before the numbers were seen.}
\label{tab:qstats}
\resizebox{\columnwidth}{!}{%
\begin{tabular}{lccc c}
\toprule
Method & Success~$\uparrow$ & vs Evolved~$\uparrow$ & 95\% interval & Keeps quality \\
\midrule
No Skill & 0.431 & -0.039 & [-0.108, +0.029] & \xmark \\
Human Skill & 0.490 & +0.020 & [-0.039, +0.078] & \cmark \\
Root-only \base & 0.422 & -0.049 & [-0.128, +0.020] & \xmark \\
Flat-concat \base & 0.382 & -0.088 & [-0.167, -0.010] & \xmark \\
SkillReducer & 0.431 & -0.039 & [-0.108, +0.029] & \xmark \\
Expert Progressive & 0.441 & -0.029 & [-0.088, +0.029] & \xmark \\
\rowcolor{SkillZipPaleBlue}\method~(One-Shot) & 0.480 & +0.010 & [-0.029, +0.059] & \cmark \\
\bottomrule
\end{tabular}}
\end{table}

The test is decisive and answers Q2 directly. Across \hl{102} pooled held-out tasks, \method has the highest pooled success of any compression method, and it is the \win{only} compressor that keeps quality: its interval against the uncompressed bundle stays above the margin, while root-only, flat-concat, SkillReducer, and even the hand-built expert baseline all fall below it. In other words, every other way of making the bundle smaller also made the agent measurably worse, and \method did not.

\begin{figure}[!t]
\centering
\includegraphics[width=\columnwidth]{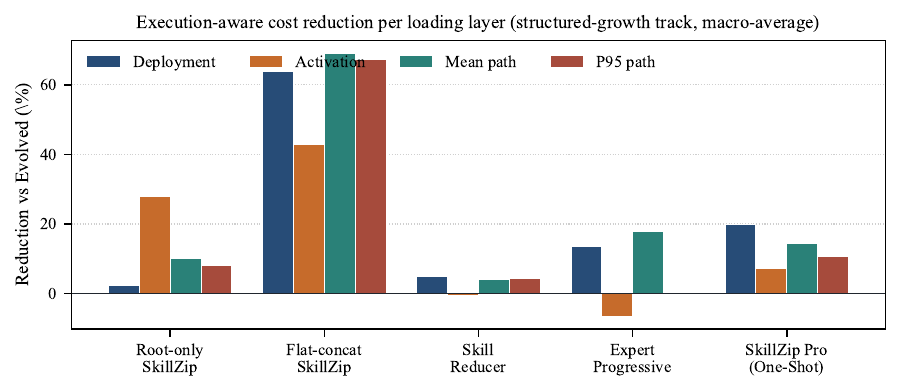}
\caption{Saving in each cost layer (grown version, averaged over benchmarks). Bars below zero mean that layer got \emph{worse}: a method can shrink the shipped package while making every single run longer, which one ratio would hide.}
\label{fig:costlayers}
\end{figure}

\begin{takeawaybox}
A single ratio is insufficient. Only \method reduces all four costs while matching the uncompressed skill; every other compressor regresses elsewhere.
\end{takeawaybox}


\subsection{Does Compression Preserve Progressive Loading?}
\label{sec:loadingresults}

A smaller bundle is useful only if the agent can still identify and load the resources required by each task. We therefore execute the grown bundles with the unmodified agent and measure both routing correctness and loading behavior. Required recall is the fraction of task-relevant resources that are loaded, irrelevant load is the fraction of loaded resources that the task does not need, and dispatch measures whether the agent selects the intended branch. We also record link errors, fallback activations, and the average number of reachable files.

\begin{table}[!t]
\centering
\caption{Loading behavior on grown bundles. Required recall is maximized; broken links, unreachable files, and irrelevant loads are minimized.}
\label{tab:loading}
\resizebox{\columnwidth}{!}{%
\begin{tabular}{lcccccc}
\toprule
Method & Req.\ recall~$\uparrow$ & Irrel.\ load~$\downarrow$ & Dispatch~$\uparrow$ & Link errors~$\downarrow$ & Fallback~$\downarrow$ & Files reached \\
\midrule
Evolved Bundle & 0.795 & 0.171 & 0.795 & 0 & -- & 6.3 \\
\rowcolor{SkillZipPaleGrey}\color{LossyGrey}Root-only \base\,\xmark & 0.685 & 0.290 & 0.685 & 0 & 0.000 & 6.3 \\
\rowcolor{SkillZipPaleGrey}\color{LossyGrey}Flat-concat \base\,\xmark & 0.708 & 0.259 & 0.708 & 0 & 0.000 & 6.0 \\
SkillReducer & 0.688 & 0.288 & 0.688 & 0 & 0.000 & 6.3 \\
\rowcolor{SkillZipPaleBlue}\method~(One-Shot) & 0.755 & 0.193 & 0.755 & \textbf{0} & 0.000 & 10.3 \\
\bottomrule
\end{tabular}}
\end{table}

Among the compression methods, \method most closely matches the loading behavior of the uncompressed bundle. It achieves a required-resource recall of $0.755$, compared with $0.795$ for the Evolved Bundle, while reducing irrelevant loading to $0.193$. The corresponding rates for the other compressors range from $0.259$ to $0.290$. The larger number of reachable files under \method reflects the shared modules and conditional capsules introduced during compression; these additional nodes remain connected to the appropriate execution paths rather than being loaded indiscriminately.

The absence of link errors alone does not establish routing fidelity. Root-only and Flat-concat \base remove routing instructions instead of leaving dangling links, so their branches remain on disk but can no longer be selected. The routing audit confirms this distinction: both baselines preserve $0.000$ of the original routing pairs, whereas \method preserves $1.000$. Its savings therefore come from shortening and sharing the resources that the agent loads, rather than making required branches unreachable.

\begin{takeawaybox}
\method preserves every declared route and retains loading behavior closest to the uncompressed bundle. Its runtime savings come from reducing the content loaded along valid execution paths, not from suppressing required branches.
\end{takeawaybox}

\begin{insightbox}{1}
For the compression of progressively loaded skills, the danger is not lost text but lost \emph{reachability}: rewriting only the root leaves every branch on disk yet unreachable in practice, so it looks safe and scores like deletion. Routing fidelity has to be measured separately from how many files still exist.
\end{insightbox}

\subsection{Is the Saving Real, or Just Deleted Text?}
\label{sec:keptresults}

A compressor can post a very large ratio simply by removing text, and a cost table cannot tell that apart from genuine compression. Table~\ref{tab:retention} therefore reports, for every method, how much of the original skill still reaches the agent, next to how much of the \emph{repeated} text it managed to remove. Repeated text is the part a bundle-level method is entitled to reclaim; it is the same sentence appearing in several files.

\begin{table}[!t]
\centering
\caption{Knowledge retained versus text saved on grown bundles. ``Kept'' counts original instruction lines that remain reachable; ``Lost'' counts unwitnessed deletions. $\dagger$ denotes an incomplete bundle.}
\label{tab:retention}
\resizebox{\columnwidth}{!}{%
\begin{tabular}{lccccc}
\toprule
Method & Kept~$\uparrow$ & Lost lines~$\downarrow$ & Repeated text removed~$\uparrow$ & Shipped saving~$\uparrow$ & Routing kept~$\uparrow$ \\
\midrule
Evolved Bundle & 1.000 & 0.0 & 0.000 & +0.0\% & 1.000 \\
\rowcolor{SkillZipPaleGrey}\color{LossyGrey}Root-only \base\,\xmark & 0.939 & 5.3 & 0.002 & +2.2\% & 0.000 \\
\rowcolor{SkillZipPaleGrey}\color{LossyGrey}Flat-concat \base\,\xmark & 0.241 & 59.0 & 1.000 & +63.8\% & 0.000 \\
SkillReducer & 1.000 & 0.0 & 0.073 & +5.0\% & 1.000 \\
Expert Progressive & 1.000 & 0.0 & 0.482 & +13.7\% & 1.000 \\
\rowcolor{SkillZipPaleBlue}\method~(One-Shot) & 0.986 & 1.0 & 0.736 & +19.7\% & 1.000 \\
\bottomrule
\end{tabular}}
\end{table}

The result reframes the whole comparison. Flat-concat \base appeared to be the strongest compressor by a wide margin, but it keeps under a quarter of the skill's instruction lines: its ratio is mostly deletion, not compression. Root-only \base also drops lines, because rewriting the root alone rewrites text that carried real conditions. \method removes the largest share of repeated text of any method while keeping essentially the whole skill, and its only removals that are not recoverable from the output are the ones the environment contract guarantees, each stored with a signed witness. Figure~\ref{fig:retention} plots the two quantities together.

\begin{figure}[!t]
\centering
\includegraphics[width=\columnwidth]{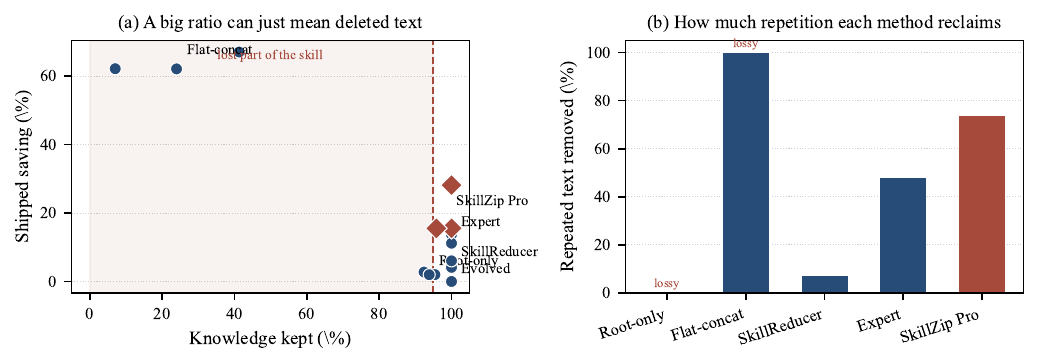}
\caption{(a) Saving versus knowledge retained; shaded points lost content. (b) Repetition reclaimed. \method achieves the largest faithful saving.}
\label{fig:retention}
\end{figure}

\begin{takeawaybox}
A large ratio may reflect deletion. \method removes the most repeated text while preserving the complete skill.
\end{takeawaybox}

\subsection{Case Study: What Does Compression Remove?}
\label{sec:example}

Aggregate compression ratios do not reveal whether a method removes redundancy or discards task-relevant content. We therefore examine one strategy file, \texttt{round\_02}, from the evolved \texttt{qwen3.6-plus} library and compare its treatment under Flat-concat \base and \method. This library contains substantial cross-file repetition: the same mathematical-output rules occur at 16 locations across the bundle, including all 15 strategy branches, while each branch retains its own workflow, verification steps, and conditional instructions. This example shows whether a compressor can eliminate the repeated content without changing the availability or behavior of the individual branch.

\begin{figure*}[!t]
\centering
\footnotesize

\begin{minipage}[t]{0.30\textwidth}
\begin{tcolorbox}[
  enhanced,
  colback=white,
  colframe=LossyGrey,
  boxrule=0.6pt,
  arc=1.5pt,
  title={\bfseries\sffamily\small Original Branch},
  left=3pt,
  right=3pt,
  top=2pt,
  bottom=2pt,
  fonttitle=\footnotesize,
  height=5.8cm
]
\ttfamily\scriptsize\raggedright
\textbf{\#\# Purpose}\\
\emph{Solve competition-style math}\\[1.5pt]

\textbf{\#\# Approach}\\
3. Notation for all unknowns\ldots\\
4. Solve step by step\ldots\\[1.5pt]

\dup{\textbf{\#\# Rules}}
\hfill{\normalfont\tiny\textcolor{LossyGrey}{$\times16$ bundle locations}}\\
\dup{- Never round intermediate\ldots}\\
\dup{- Reduce fractions\ldots}\\[1.5pt]

\dup{\textbf{\#\# Output}}
\hfill{\normalfont\tiny\textcolor{LossyGrey}{$\times16$ bundle locations}}\\
\dup{- \textbackslash boxed\{...\} final answer}\\
\dup{- multi-answer: comma list}
\hfill{\normalfont\tiny\textcolor{LossyGrey}{$\times15$}}\\
\dup{- single: output only that}
\hfill{\normalfont\tiny\textcolor{LossyGrey}{$\times7$}}\\[1.5pt]

\textbf{\#\# Verification}\\
- Substitute back\\
- Sanity-check units
\end{tcolorbox}
\end{minipage}\hfill
\begin{minipage}[t]{0.33\textwidth}
\begin{tcolorbox}[
  enhanced,
  colback=SkillZipPaleGrey,
  colframe=LossyGrey,
  boxrule=0.6pt,
  arc=1.5pt,
  title={\bfseries\sffamily\small Flat-concat \base\ \ding{55}\ (branch removed)},
  left=3pt,
  right=3pt,
  top=2pt,
  bottom=2pt,
  fonttitle=\footnotesize,
  coltitle=LossyGrey,
  height=5.8cm
]
\scriptsize\raggedright
\gone{\textbf{13 of 15 strategy branches removed.}}\\
\gone{Only \texttt{round\_00} and \texttt{round\_10} remain.}\\[2pt]

\gone{Root reduced to a flat routing list:}\\
\gone{\ttfamily~- {-}{-}{-}name: \ldots}\\
\gone{\ttfamily~- read [round\_05](\ldots)}\\
\gone{\ttfamily~- read [specialist](sub/\ldots)}\\[4pt]

\normalfont\scriptsize
\renewcommand{\arraystretch}{1.15}
\begin{tabular}{@{}c@{\,}l@{}}
\textcolor{LossyGrey}{\ding{55}} & \gone{Rules removed}\\
\textcolor{LossyGrey}{\ding{55}} & \gone{Output instructions removed}\\
\textcolor{LossyGrey}{\ding{55}} & \gone{Verification steps removed}\\
\textcolor{LossyGrey}{\ding{55}} & \gone{Routing link removed}\\
\textcolor{LossyGrey}{\ding{55}} & \gone{Branch-specific approach removed}
\end{tabular}\\[3pt]

\normalfont\scriptsize
Branch content retained: \caveat{0.2\%};
routing path: \caveat{unavailable}.
\end{tcolorbox}
\end{minipage}\hfill
\begin{minipage}[t]{0.33\textwidth}
\begin{tcolorbox}[
  enhanced,
  colback=white,
  colframe=PreprintBlue,
  boxrule=1pt,
  arc=1.5pt,
  title={\bfseries\sffamily\small \method\ \ding{51}\ (70\% fewer tokens)},
  left=3pt,
  right=3pt,
  top=2pt,
  bottom=2pt,
  fonttitle=\footnotesize,
  height=5.8cm
]
\scriptsize\raggedright
\ttfamily
\textbf{\#\# Workflow}\\
1. Read \kept{[rounding rule]}
   \,{\tiny\textcolor{PreprintGreen}{$\times16$ locations}}\\
~~\kept{[multi-answer]}
   \,{\tiny\textcolor{PreprintGreen}{$\times15$}}
   \kept{[single]}
   \,{\tiny\textcolor{PreprintGreen}{$\times7$}}\\
2. Notation for unknowns\ldots\\
3. Ordered vs.\ unordered\ldots\\
4. \guard{If ambiguous, state assumptions}\\
5. \textbackslash boxed\{...\}\\

\textbf{\#\# Verification}\\
- Substitute back (branch-specific)\\[3pt]

\normalfont\scriptsize
\renewcommand{\arraystretch}{1.15}
\begin{tabular}{@{}c@{\,}l@{}}
\textcolor{PreprintGreen}{\ding{51}}
  & Rules moved to a \kept{linked shared module}\\
\textcolor{PreprintGreen}{\ding{51}}
  & Output instructions moved to a \kept{linked shared module}\\
\textcolor{PreprintGreen}{\ding{51}}
  & Verification steps preserved verbatim\\
\textcolor{PreprintGreen}{\ding{51}}
  & Routing link remains \win{intact}\\
\textcolor{PreprintGreen}{\ding{51}}
  & Branch-specific approach preserved
\end{tabular}
\end{tcolorbox}
\end{minipage}

\vspace{3pt}

\noindent
\fcolorbox{Line}{PaleGray}{
\parbox{0.97\textwidth}{
\scriptsize
\textbf{Legend.}~
\colorbox{DupYellow}{Yellow} indicates content repeated across the bundle.
\colorbox{KeepGreen}{Green} indicates content stored once in a linked shared module.
\colorbox{GuardBlue}{Blue} indicates a guarded instruction preserved verbatim.
\gone{Grey} indicates content removed by Flat-concat \base.
}}\\[3pt]

\begin{minipage}[t]{0.38\textwidth}
\begin{tcolorbox}[
  enhanced,
  colback=KeepGreen!30,
  colframe=PreprintGreen,
  boxrule=0.6pt,
  arc=1.5pt,
  title={\bfseries\sffamily\scriptsize Shared module},
  left=3pt,
  right=3pt,
  top=2pt,
  bottom=2pt,
  fonttitle=\scriptsize
]
\ttfamily\tiny\raggedright
\#\# Notes\\
- Never round intermediate results unless the\\
~~problem explicitly asks for a decimal.\\
- Reduce fractions to lowest terms and\\
~~rationalize denominators where standard.\\
- For counting: ordered vs.\ unordered,\\
~~with vs.\ without replacement.\\
- Give the answer in simplest exact form.\\[2pt]

\normalfont\tiny
\emph{This 315-token module is stored once and referenced from
16 locations across the bundle. Replacing the other 15 copies avoids
$15 \times 315 = 4{,}725$ repeated tokens.}
\end{tcolorbox}
\end{minipage}\hfill
\begin{minipage}[t]{0.30\textwidth}
\centering
\includegraphics[width=\linewidth]{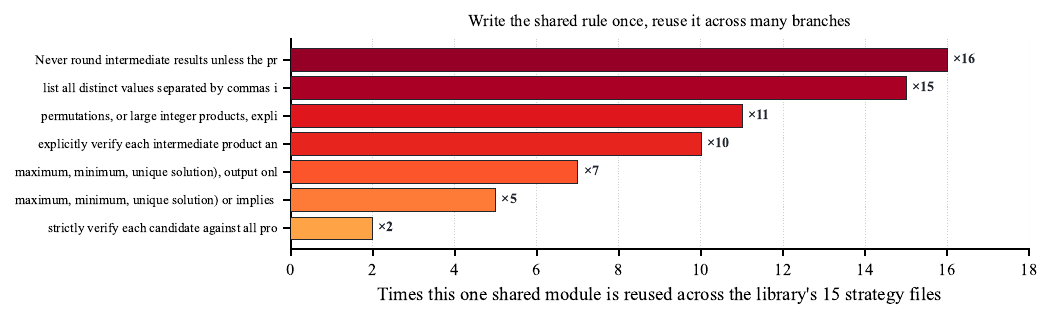}
\end{minipage}\hfill
\begin{minipage}[b]{0.28\textwidth}
\begin{tcolorbox}[
  enhanced,
  colback=SkillZipPaleBlue,
  colframe=PreprintBlue,
  boxrule=0.6pt,
  arc=2pt,
  title={\bfseries\sffamily\scriptsize Case summary},
  left=3pt,
  right=3pt,
  top=2pt,
  bottom=2pt,
  fonttitle=\scriptsize
]
\scriptsize
\begin{tabular}{@{}lr@{}}
Original branch & 527 tok\\
\rowcolor{SkillZipPaleBlue}
\method{} branch & \win{160 tok}\\
Token reduction & \hl{70\%}\\
Shared modules & 7\\
Branches retained & \win{15/15}\\
Flat-concat retained & \caveat{2/15}
\end{tabular}
\end{tcolorbox}
\end{minipage}

\caption{\textbf{One real strategy file (\texttt{round\_02}, \texttt{qwen3.6-plus} library) under three methods, plus the reuse it enables.} \colorbox{DupYellow}{Yellow} marks text duplicated across branches (with its repeat count); \colorbox{KeepGreen}{green} marks the shared module \method{} factors it into; \colorbox{GuardBlue}{blue} marks a guarded rule kept word for word. \emph{Bottom left:} how many of the 15 strategy files reuse each shared module \method{} created. \emph{Bottom right:} the token and reachability outcome for this file. Flat-concat's big ratio comes from \gone{deleting the file}; \method{}'s saving comes from writing each shared rule \emph{once}.}
\label{fig:example}
\end{figure*}

Figure~\ref{fig:example} illustrates the source of the measured savings.
Flat-concat \base retains only two of the fifteen strategy branches, making
\texttt{round\_02} unavailable to the agent. In contrast, \method reduces
this file from 527 to 160 tokens while keeping all fifteen branches
reachable. Repeated rules are stored once and referenced from each relevant
location, whereas the branch-specific workflow, verification steps, and
guarded instructions remain unchanged. The reduction therefore comes from
cross-file reuse rather than the removal of an execution path.

\subsection{How Do Savings Accumulate Across Repeated Use?}

Because a skill is deployed once but may be invoked many times, deployment size alone does not determine its total cost. We therefore measure cumulative token cost as the sum of the one-time deployment cost and the runtime cost of executing $N$ tasks. Table~\ref{tab:workload} reports the percentage reduction relative to the Evolved Bundle. We separately identify methods that retain at least $95\%$ of the original lines, since reductions obtained by removing substantial skill content are not directly comparable.

\begin{table}[!t]
\centering
\caption{Reduction in cumulative token cost relative to the Evolved Bundle, averaged across benchmarks. The cost includes one deployment and $N$ task executions. The retention criterion requires a method to preserve at least $95\%$ of the original lines; $\dagger$ denotes methods that do not satisfy this criterion.}
\label{tab:workload}
\resizebox{\columnwidth}{!}{%
\begin{tabular}{lcccccc}
\toprule
Method & Keeps skill & $N{=}1$~$\uparrow$ & $N{=}10$~$\uparrow$ & $N{=}50$~$\uparrow$ & $N{=}200$~$\uparrow$ & $N{=}1000$~$\uparrow$ \\
\midrule
Root-only \base & no$^{\dagger}$ & +5.1\% & +11.5\% & +13.8\% & +14.3\% & +14.5\% \\
Flat-concat \base & no$^{\dagger}$ & +63.6\% & +63.1\% & +62.9\% & +62.9\% & +62.9\% \\
SkillReducer & yes & +5.1\% & +5.4\% & +5.4\% & +5.5\% & +5.5\% \\
Expert Progressive & yes & +13.1\% & +11.7\% & +11.3\% & +11.1\% & +11.1\% \\
\rowcolor{SkillZipPaleBlue}\method~(One-Shot) & yes & +17.9\% & +13.9\% & +12.5\% & +12.2\% & +12.1\% \\
\bottomrule
\end{tabular}}
\end{table}

Among the methods that satisfy the retention criterion, \method achieves the largest cumulative saving at every workload size. Its reduction is $17.9\%$ for a single task and remains $12.1\%$ after 1,000 tasks, compared with $11.1\%$ for Expert Progressive and $5.5\%$ for SkillReducer. As $N$ increases, the effect of the one-time deployment cost diminishes and the results approach each method's per-run saving. The ordering of the three eligible methods nevertheless remains unchanged. Root-only and Flat-concat \base sometimes report larger reductions, but both fall below the retention threshold and are therefore excluded from this comparison. Figure~\ref{fig:workload}(a) presents the same results as cumulative-cost curves.


\begin{takeawaybox}
Runtime cost matters more than a one-time shipping ratio. Among methods that preserve the skill, \method is the \win{cheapest} at every workload size.
\end{takeawaybox}

\subsection{What Happens as the Library Keeps Growing}

Self-evolving libraries do not stay small. We grow one step by step, adding branches that repeat the same shared text, and record both sides of the ledger at each size.

\begin{table}[!t]
\centering
\caption{Behavior at the smallest and largest tested libraries, averaged over benchmarks. Retention and routing use the largest size.}
\label{tab:scaling}
\resizebox{\columnwidth}{!}{%
\begin{tabular}{lcccccc}
\toprule
& \multicolumn{2}{c}{Shipped saving} & \multicolumn{2}{c}{Per-run saving} & \multicolumn{2}{c}{At largest size} \\
\cmidrule(lr){2-3}\cmidrule(lr){4-5}\cmidrule(lr){6-7}
Method & small~$\uparrow$ & large~$\uparrow$ & small~$\uparrow$ & large~$\uparrow$ & Kept~$\uparrow$ & Routing~$\uparrow$ \\
\midrule
Root-only \base & +2.0\% & +5.2\% & +8.7\% & +3.5\% & 0.896 & 0.000 \\
SkillReducer & +5.4\% & +4.3\% & +4.7\% & +2.1\% & 1.000 & 1.000 \\
Expert Progressive & +12.4\% & +22.6\% & +18.4\% & +10.4\% & 1.000 & 1.000 \\
\rowcolor{SkillZipPaleBlue}\method~(One-Shot) & +21.8\% & +25.5\% & +17.6\% & +5.3\% & 0.984 & 1.000 \\
\bottomrule
\end{tabular}}
\end{table}

\method is the only approach that preserves the complete skill and routing table at every size, and its per-run cost remains flat as the library grows. At the largest size, however, Expert Progressive removes more shipped bytes by moving all repeated text into one root-linked file. Every run then reads that growing file, so its runtime saving declines. \method keeps each shared block within the branches that use it, trading some storage saving for lower execution cost. Routing also has an unavoidable price: the root retains one condition per branch. An attempted on-demand grouping added a hop to every run and failed the never-inflate check. Figure~\ref{fig:workload}(b,c) shows this trade-off.

\begin{figure}[!t]
\centering
\includegraphics[width=\columnwidth]{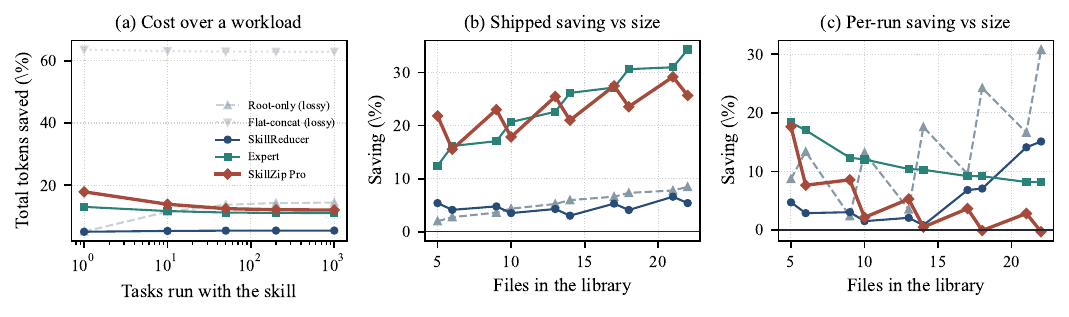}
\caption{(a) Cumulative token saving as tasks grow; faded methods lost skill content. (b) Shipped and (c) per-run saving as library size grows. \method keeps per-run saving stable.}
\label{fig:workload}
\end{figure}


\subsection{Real Evolved Skill Libraries}
\label{sec:evolved}

The bundles used so far repeat about a third of their text. Libraries that a self-evolving agent actually leaves behind repeat far more, because every round appends its output rules, its checklist and its growing list of past mistakes into whichever skill it edits. We therefore take the evolution runs stored with this project, keep \emph{every} round as its own file, and assemble each run into one progressively loaded library: \textbf{17 files, 15 rounds, about 20{,}000 tokens, and 79--84\% repeated text}. This is the regime the method is built for, so it carries the headline numbers.

\begin{table}[!t]
\centering
\caption{Real evolved libraries, averaged over three runs. $\dagger$ marks incomplete bundles. For \method, brackets count published compressions; otherwise the never-inflate check republishes the source.}
\label{tab:evolved}
\resizebox{\columnwidth}{!}{%
\begin{tabular}{lccccc}
\toprule
Method & Kept~$\uparrow$ & Shipped~$\uparrow$ & Always loaded~$\uparrow$ & One run~$\uparrow$ & Routing~$\uparrow$ \\
\midrule
Evolved library (uncompressed) & 1.000 & +0.0\% & +0.0\% & +0.0\% & 1.000 \\
Root-only \base & 0.951 & +1.6\% & +69.2\% & +12.5\% & 0.000 \\
\rowcolor{SkillZipPaleGrey}\color{LossyGrey}Flat-concat \base\,\xmark & 0.002 & +64.4\% & +82.4\% & +90.2\% & 0.000 \\
SkillReducer & 0.954 & +6.3\% & +63.0\% & +30.9\% & 0.125 \\
Expert Progressive & 1.000 & +25.0\% & -2.9\% & +26.0\% & 1.000 \\
\rowcolor{SkillZipPaleBlue}\method~(One-Shot) & 1.000 & +23.1\% (2/3) & +1.7\% & +17.7\% & 1.000 \\
\bottomrule
\end{tabular}}
\end{table}

Three readings matter. First, where it publishes a result, \method removes \hl{34.7\%} of shipped tokens versus \emph{25.0\%} for the strongest faithful baseline, while preserving the skill and routing list. Second, that baseline moves all repeated text into one root-linked file, making the always-loaded layer \emph{longer} ($-2.9\%$); only \method improves every layer at full fidelity. Third, root-only and SkillReducer gain $+69\%$ and $+63\%$ in the root by rewriting away routing, reflected in routing scores of $0.000$ and $0.125$.

On the third library, \method \emph{declined to compress} and republished the source byte for byte. The candidate passed the audit but did not improve the objective, so Algorithm~\ref{alg:skillzippro-oneshot} rejected it. This fallback is intentional: the method never ships an unproven regression. Continual mode later compresses the same library to $+49.9\%$ by repacking at a more favorable point in the stream (Section~\ref{sec:evolvedcontinual}).

\begin{figure}[!t]
\centering
\includegraphics[width=\columnwidth]{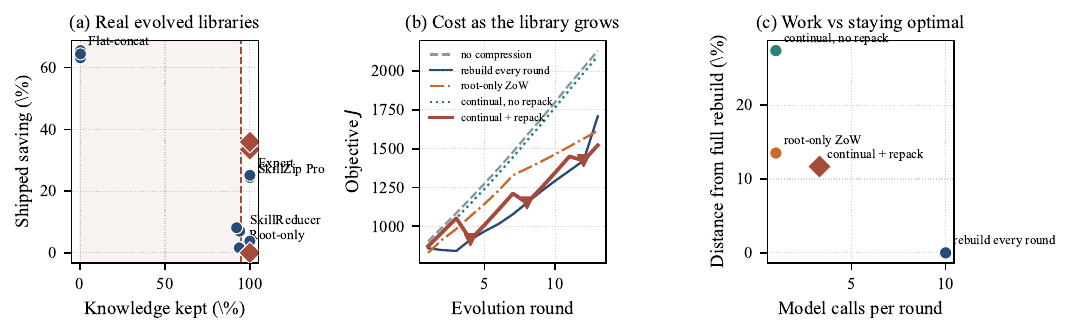}
\caption{(a) Saving versus retained knowledge on evolved libraries. (b) Published cost over rounds; triangles mark repacks. (c) Update work versus final drift from One-Shot rebuilding.}
\label{fig:evolved}
\end{figure}

\begin{takeawaybox}
On real self-evolved libraries, \method removes \win{about one third} of shipped tokens where it publishes---roughly ten points more than the strongest content-preserving baseline. It is also the only method that improves shipped, always-loaded, and per-run cost together; otherwise it keeps the original bundle.
\end{takeawaybox}

\subsection{Continual Compression Over a Long Evolution Stream}
\label{sec:evolvedcontinual}

Continual compression is intended to incorporate new skill updates without rebuilding the entire library after every change. We evaluate whether it can reduce this update cost while still recovering redundancy that accumulates across multiple rounds. Specifically, we replay \textbf{13 real evolution rounds} for each of three libraries and compare five compression schedules. All schedules begin from the same initial state and receive the same sequence of updates.

\begin{table*}[!t]
\centering
\caption{Continual compression over 13 real evolution rounds, averaged across three libraries. Savings are measured relative to publishing every update without compression. Drift measures the difference from rebuilding the complete library at the same round; a negative value indicates a smaller result than the rebuild. Compressor calls, processed bytes, and wall-clock time are reported per round.}
\label{tab:evolvedcont}
\resizebox{0.92\textwidth}{!}{%
\begin{tabular}{lrrrrrr}
\toprule
Schedule &
Shipped saving~$\uparrow$ &
Per-run saving~$\uparrow$ &
Drift~$\downarrow$ &
Calls/round~$\downarrow$ &
Bytes/round~$\downarrow$ &
Seconds/round~$\downarrow$ \\
\midrule
Publish without compressing
& +0.0\% & +0.0\% & +0.000 & 0.00 & 3201 & 0.000 \\

Rebuild from scratch every round
& +35.1\% & +18.7\% & +0.000 & 10.00 & 13388 & 0.234 \\

\base~Zip-on-Write (root only)
& +2.4\% & +19.6\% & -0.027 & 1.00 & 389 & 0.094 \\

\method~Continual, no repack
& +2.7\% & +0.8\% & +0.262 & 1.00 & 3201 & 0.008 \\

\rowcolor{SkillZipPaleBlue}
\method~Continual + repack
& +48.1\% & +28.2\% & -0.083 & 3.31 & 5697 & 0.068 \\
\bottomrule
\end{tabular}}
\end{table*}

    \textbf{\method Continual with repacking removes \hl{48.1\%} of shipped tokens and \hl{28.2\%} of per-run tokens while requiring only \hl{3.31} compressor calls per round.} Rebuilding the complete library after every update requires \hl{10} calls per round, so Continual mode uses approximately one third as many calls. It also reduces the average processing time from $0.234$ to $0.068$ seconds per round. The result is slightly smaller than the corresponding rebuild, as indicated by the drift of $-0.083$. This difference occurs because the final rebuild rejects a non-improving candidate, whereas Continual mode retains the valid compressed state produced by an earlier repack.

Periodic repacking is necessary to recover redundancy introduced across different evolution rounds. Without repacking, the shipped-token saving falls to \emph{2.7\%}, the per-run saving falls to \emph{0.8\%}, and drift increases to $+0.262$. This result shows that processing each update locally is insufficient when related content is added in separate rounds. Root-only Zip-on-Write requires only one call and 389 processed bytes per round, but it reduces the shipped library by just \emph{2.4\%} because it cannot consolidate repeated content across files. Although it reduces the root-level per-run cost, it leaves most bundle-wide redundancy unchanged. Figure~\ref{fig:evolved}(b,c) shows how these differences accumulate over the evolution stream.

\begin{takeawaybox}
With periodic repacking, Continual mode recovers cross-round redundancy while using \win{approximately one third} as many compressor calls as rebuilding after every update. Without repacking, shipped-token savings remain at only \caveat{2.7\%}.
\end{takeawaybox}

\subsection{When Should Compression Be Switched On?}
\label{sec:startearly}

A self-evolving library is republished and reloaded after every round. Waiting therefore costs more than final size: the agent carries the uncompressed library through every earlier round. We switch compression on at $k\in\{1,4,7,10,13\}$, plus a never-compress baseline.

The metric that answers the question is \textbf{the total tokens the agent carries across the whole stream}: for each round we add up the full size of the library published at that round. We also report the total compressor calls, so a schedule cannot look cheap merely by doing less work.

\begin{table}[htbp]
\centering
\caption{When to switch compression on, averaged over the three libraries and the whole 15-round stream. Carried tokens is the sum, over rounds, of the size of the library the agent holds that round.}
\label{tab:start}
\resizebox{\columnwidth}{!}{%
\begin{tabular}{lrrr}
\toprule
Compression is on & Tokens carried~$\downarrow$ & Tokens saved~$\uparrow$ & Calls~$\downarrow$ \\
\midrule
\rowcolor{SkillZipPaleBlue}from round 1 & 48055 & +40.8\% & 43 \\
from round 4 & 49627 & +38.9\% & 36 \\
from round 7 & 53234 & +34.4\% & 42 \\
from round 10 & 56179 & +30.8\% & 30 \\
from round 13 & 66139 & +18.5\% & 16 \\
never switch it on & 81176 & +0.0\% & 0 \\
\bottomrule
\end{tabular}}
\end{table}

\textbf{The ordering is monotone and the effect is large.} Switching on at the first round saves \hl{40.8\%} of everything the agent carries over the stream; waiting until round 13 recovers only \caveat{18.5\%}, less than half as much, and starting at round 1 beats starting at round 7 by a clear \hl{9.7\%}. Every extra round of waiting leaves saving on the table.

The reason is worth stating precisely, because it is not simply ``smaller is better''. Repetition in an evolved library is \emph{cumulative}: round $t$ re-appends the same output rules and checklist that rounds $1..t-1$ already contain, so the number of duplicate copies grows with $t$. Compression removes copies, not rounds. Starting at round $k$ therefore leaves the agent paying full price on rounds $1..k-1$, and those tokens can never be recovered afterwards -- the final library can be compressed just as well later, but the intervening runs are already spent. Figure~\ref{fig:start}(b) plots the running total the agent has paid: the three curves never cross, so an earlier start is cheaper at \emph{every} round and the gap only widens. Early compression is strictly cheaper because delayed rounds cannot be recovered. Starting at round 1 requires 43 compressor calls over 15 rounds, versus 16 for a late start, and uses no task rollouts.

\begin{figure}[!t]
\centering
\includegraphics[width=\columnwidth]{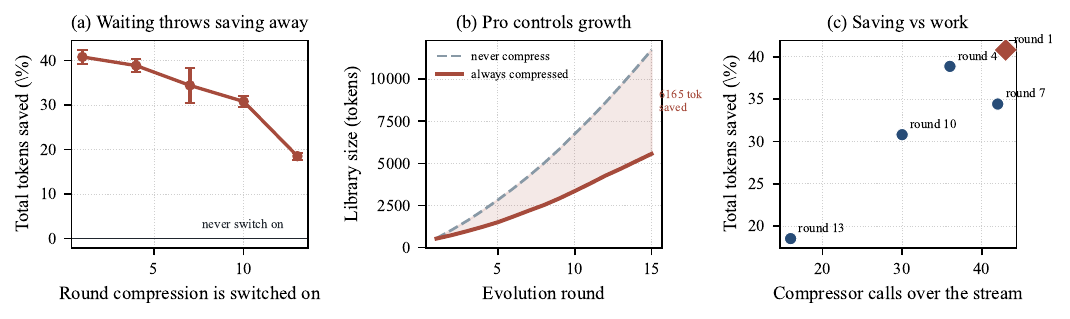}
\caption{(a) Total tokens saved over the stream against the round compression is switched on; bars show the spread across the three libraries. (b) Actual library size at each round: uncompressed grows linearly while Pro keeps it roughly \hl{50\%} smaller by removing repeated text as it appears. (c) The same total saving plotted against the number of compressor calls.}
\label{fig:start}
\end{figure}


\begin{insightbox}{2}
For a self-evolving agent, compression should start early. Repetition accumulates each round, and later compression cannot recover context already paid for. The lowest cumulative cost therefore comes from enabling compression in the first evolution round.
\end{insightbox}

\subsection{Why the Savings Compound: A Look Inside Self-Evolution}
\label{sec:compounding}

The results so far show \emph{that} \method{} helps; this subsection shows \emph{why}, and the reason is the surprising finding in the paper. We measured, at every evolution round, how much of the growing library is genuinely new and how much simply repeats text the agent has already written, by compressing each round-$k$ prefix and comparing its size against the raw prefix. Three facts fall out, all from the same real evolved libraries and all consistent across runs.

\begin{figure*}[!t]
\centering
\includegraphics[width=0.92\textwidth]{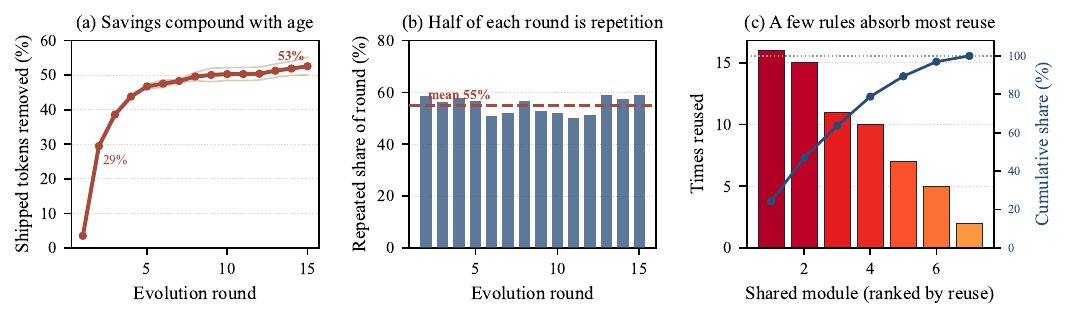}
\caption{\textbf{Why the savings compound.} (a) The share of shipped tokens \method{} removes climbs from \hl{29\%} at round~2 to \hl{53\%} at round~15 and is still rising -- the method gets \emph{more} valuable the longer the agent evolves. (b) The reason: the fraction of each new round that merely repeats earlier text is stable at \win{55\%$\pm$3\%} -- more than half of everything a self-evolving agent writes, it has written before. (c) That repetition is concentrated: a few ``universal'' rules absorb most of the reuse (top two of seven modules $=$ \hl{47\%}), so a small shared library covers the bulk of the redundancy.}
\label{fig:compounding}
\end{figure*}

\textbf{Finding 1 --- repetition accumulates during self-evolution.}
Across rounds, \win{$55\%\pm3\%$} of the content added in each round overlaps with existing text (Figure~\ref{fig:compounding}b). This proportion remains relatively stable across rounds and models, as append-only updates often repeat output formats, verification checklists, and previously accumulated rules when extending a skill.

\textbf{Finding 2 --- so the savings compound with age.} Because redundancy accumulates, the share \method{} can remove \emph{grows} with the library: from \hl{29\%} at round~2 to \hl{53\%} at round~15, still climbing (Figure~\ref{fig:compounding}a). \textbf{\emph{The method does not have diminishing returns; it has increasing ones.}} The longer an agent runs, the more of its library is repetition, and the more \method{} saves -- the opposite of how one-shot compressors of static prompts behave.

\textbf{Finding 3 --- the repetition is concentrated.} The reuse is heavy-tailed: of the seven shared modules \method{} factors out on the worked library, the top two account for \hl{47\%} and the top three for \hl{64\%} of all reuse (Figure~\ref{fig:compounding}c). A handful of ``universal'' rules -- how to format the answer, when to keep exact values -- are re-derived over and over, so a very small shared library covers most of the redundancy.

\textbf{\emph{Remark.}} 
For references or subskills that are used only through the root skill, these universal rules are redundant and can be removed through \emph{Persistent Compression}. If a reference or subskill must also remain independently usable, however, a rule repeated from the root may still be required by its standalone entry contract and cannot be removed solely because of that overlap. In this case, \emph{Transient Compression} can eliminate the duplication within a task-specific execution view while leaving the original resource unchanged. We will discuss this distinction in detail in Section~\ref{sec:lifecycleexp}.

\begin{insightbox}{3}
\textbf{Self-evolution manufactures redundancy, and \method{} turns that into a compounding advantage.} More than half of every new round (\win{55\%}) is text the agent has already written, so the share \method{} removes \emph{rises} with age -- \hl{29\%}$\to$\hl{53\%} over fifteen rounds and still climbing. A compressor for evolving skills is therefore worth \emph{more} the longer it runs, not less; and because the repetition concentrates in a few universal rules, a tiny shared library captures most of it. This is the reason to put compression \emph{inside} the evolution loop rather than treating it as occasional cleanup.
\end{insightbox}

\subsection{Runtime Overhead of Compression}

Table~\ref{tab:efficiency} measures the computational cost of producing a compressed bundle. We report wall-clock time, model calls, model tokens, agent rollouts, and peak memory. These measurements isolate compression overhead from the subsequent cost of executing tasks with the compressed skill.

\begin{table}[!t]
\centering
\caption{Mean cost of compressing one skill bundle. Structural extraction, cross-file transformation, and post-compression auditing are deterministic and require no model calls, model tokens, or agent rollouts.}
\label{tab:efficiency}
\resizebox{\columnwidth}{!}{%
\begin{tabular}{lrrrrr}
\toprule
Method &
Time (s)~$\downarrow$ &
Calls~$\downarrow$ &
Tokens~$\downarrow$ &
Rollouts~$\downarrow$ &
Peak GB~$\downarrow$ \\
\midrule
Root-only \base
& 0.11 & 0 & 0 & 0 & 0.02 \\

Flat-concat \base
& 0.14 & 0 & 0 & 0 & 0.02 \\

SkillReducer
& 0.12 & 0 & 0 & 0 & 0.02 \\

\rowcolor{SkillZipPaleBlue}
\method~(One-Shot)
& 0.21 & 0 & 0 & 0 & 0.02 \\
\bottomrule
\end{tabular}}
\end{table}

\method compresses a bundle in \textbf{0.21 seconds on average}, compared with \textbf{0.11 seconds} for Root-only \base. The additional $0.10$ seconds are used to resolve the resource graph, identify content that can be shared across files, construct conditional capsules, and audit the resulting bundle from disk. Despite these additional checks, \method requires \win{no task feedback} and \win{no agent rollouts}. Its peak memory usage remains \textbf{0.02 GB}, identical to the other methods.





\subsection{Does a Compressed Bundle Transfer Across Models?}

A compressed bundle should not depend on the model that later executes it. We therefore evaluate the \emph{same} \method output with three executor models(qwen3.7-max as the evolving model), without recompiling or modifying any files. For each executor, we compare task success and progressive-loading behavior against the same uncompressed Evolved Bundle. The reported results are run on the structured track of two of our three benchmarks—LiveMathematicianBench (27 held-out tasks) and BFCL-v4 (17 held-out tasks), 44 tasks in total—with each cell macro-averaged over the two benchmarks.

\begin{table}[!t]
\centering
\caption{Transfer of one compressed bundle across three executor models. Each executor runs the same \method bundle and the corresponding uncompressed Evolved Bundle on identical tasks. Required-resource recall and irrelevant loading measure whether compression changes the resources selected during execution.}
\label{tab:transfer}
\resizebox{\columnwidth}{!}{%
\begin{tabular}{lcccccc}
\toprule
& \multicolumn{2}{c}{Task success}
& \multicolumn{2}{c}{Req. recall}
& \multicolumn{2}{c}{Irrel. load} \\
\cmidrule(lr){2-3}
\cmidrule(lr){4-5}
\cmidrule(lr){6-7}
Executor
& Evolved~$\uparrow$
& \method~$\uparrow$
& Evolved~$\uparrow$
& \method~$\uparrow$
& Evolved~$\downarrow$
& \method~$\downarrow$ \\
\midrule
qwen3.7-max
& 0.541 & 0.619
& 0.679 & 0.690
& 0.321 & 0.300 \\

qwen3.6-plus
& 0.608 & 0.656
& 0.738 & 0.727
& 0.224 & 0.223 \\

kimi-k2.6
& 0.611 & 0.574
& 0.435 & 0.490
& 0.077 & 0.287 \\
\bottomrule
\end{tabular}}
\end{table}

The compressed bundle remains executable without modification under all three models, confirming that its files, relative links, and routing structure do not require an executor-specific integration. With qwen3.7-max and qwen3.6-plus, \method improves task success from $0.541$ to \win{0.619} and from $0.608$ to \win{0.656}, respectively, while required-resource recall and irrelevant loading remain close to those of the uncompressed bundle. The result is less consistent for kimi-k2.6: required recall improves from $0.435$ to \win{0.490}, but task success decreases from $0.611$ to \caveat{0.574} and irrelevant loading increases from $0.077$ to \caveat{0.287}. Thus, the bundle is structurally portable, although its loading efficiency still depends on how the executor interprets the shared routing instructions.


\begin{insightbox}{4}
\textbf{Compression changes not only bundle size, but also cross-model transferability.}
Moving repeated instructions from inline copies into shared files makes execution depend more strongly on routing. This restructuring \win{improves task success for both Qwen executors}, but causes \caveat{more irrelevant loading and slightly lower success for kimi-k2.6}. 
\end{insightbox}





\subsection{Which Part Does the Work?}

Table~\ref{tab:ablation} switches off one piece at a time, grouped by the two pillars. The first four rows ablate \textbf{\textcolor{SkillZipBlue}{Pillar~1}} (what and where to compress); the last three probe \textbf{\textcolor{PreprintGreen}{Pillar~2}} (keeping the routing intact). ``No resource graph'' approximates compressing each file on its own; ``global sharing'' deliberately ignores which branches need a block and tests the failure predicted by Proposition~\ref{prop:sparseroot}.

\begin{table*}[!t]
\centering
\caption{Turning off one piece at a time (grown version). ``Damage'' counts files left unreachable plus broken links. Quality columns use held-out runs and are reported for the full method.}
\label{tab:ablation}
\resizebox{0.93\textwidth}{!}{%
\begin{tabular}{lccccccc}
\toprule
Variant & $J$ saving~$\uparrow$ & Shipped~$\uparrow$ & Always loaded~$\uparrow$ & Per run~$\uparrow$ & Task success~$\uparrow$ & Req.\ recall~$\uparrow$ & Damage~$\downarrow$ \\
\midrule
\rowcolor{SkillZipPaleBlue}\method~(One-Shot) & 0.142 & +21.1\% & +7.3\% & +15.8\% & 0.560 & 0.755 & 0 \\
$-$ Resource graph & 0.118 & +13.7\% & +7.3\% & +13.6\% & -- & -- & 0 \\
$-$ Host entailment & 0.077 & +15.3\% & +2.0\% & +8.3\% & -- & -- & 0 \\
$-$ Scoped sharing & 0.132 & +7.9\% & +7.3\% & +17.0\% & -- & -- & 0 \\
$-$ Conditional capsules & 0.126 & +26.9\% & +7.3\% & +11.8\% & -- & -- & 0 \\
$-$ Cross-file audit & 0.142 & +21.1\% & +7.3\% & +15.8\% & -- & -- & 0 \\
$-$ Routing-table lock & 0.156 & +22.2\% & +7.1\% & +17.8\% & -- & -- & 0 \\
Global sharing (unsafe control) & -0.166 & +17.0\% & -84.9\% & -0.9\% & -- & -- & 1 \\
\bottomrule
\end{tabular}}
\end{table*}

The ablations separate the pillars. \textbf{\textcolor{SkillZipBlue}{Pillar~1}} supplies the savings: removing scoped sharing loses most deployment reduction, while removing capsules trades lower storage for higher per-run cost, as Eq.~\eqref{eq:capsulethreshold} predicts. \textbf{\textcolor{PreprintGreen}{Pillar~2}} makes those savings deployable. Global sharing pushes content into the root, drives $J$ below the source ($-0.166$), and makes one file unreachable. Disabling only the routing lock leaves routing intact because reference-line preservation and the final reachability audit remain. Routing safety therefore comes from layered checks, not one switch.

We sweep $\lambda\in\{0,0.01,0.05,0.1,0.25,1\}$ and rerun the optimizer. Figure~\ref{fig:pareto} shows identical transformations and costs across the range. Each bundle-level move is accepted only when package saving exceeds added loading cost, so the decision remains stable from $\lambda=0$ to $1$. The default $\lambda=0.05$ is not tuned to the results.

\begin{figure}[!t]
\centering
\includegraphics[width=\columnwidth]{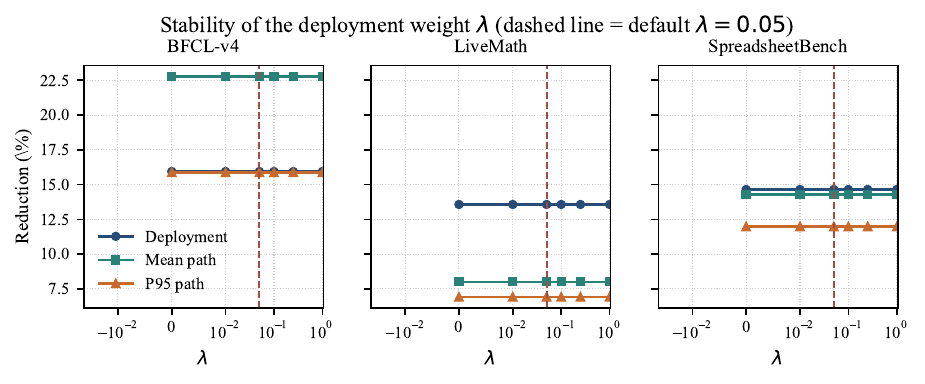}
\caption{Sensitivity to storage weight $\lambda$. Costs remain unchanged across two orders of magnitude; the dashed default is not result-tuned.}
\label{fig:pareto}
\end{figure}

\begin{takeawaybox}
Cross-file sharing produces most of the shipped saving, while capsules trade a small storage cost for cheaper runs. Ignoring branch scope breaks the bundle, and varying $\lambda$ over two orders of magnitude does not change the result.
\end{takeawaybox}

\subsection{One-Shot Versus Continual}

Patches are replayed in order into a fixed source, so every checkpoint has one identical uncompressed bundle that all schedules start from. We compare: publish without compressing; rebuild fully after every patch (the reference); root-only Zip-on-Write; continual with local repair and no repack; and continual with the preset repack rule. Continual runs may only reuse state from before the current patch.

\begin{table*}[!t]
\centering
\caption{After the last patch, averaged over benchmarks. Savings are against publishing without compressing. Drift is measured against a full rebuild of the same state. Update seconds, bytes touched, and calls are per patch.}
\label{tab:zow}
\resizebox{0.94\textwidth}{!}{%
\begin{tabular}{lrrrrrrr}
\toprule
Schedule & Shipped saving~$\uparrow$ & Per-run saving~$\uparrow$ & Drift~$\downarrow$ & Update s~$\downarrow$ & Bytes touched~$\downarrow$ & Calls/patch~$\downarrow$ & $\Delta$Quality \\
\midrule
Append Only & +0.0\% & +0.0\% & +4.6\% & 0.001 & 263 & 0.0 & 0 \\
\method~One-Shot after every patch & +12.4\% & +2.1\% & +0.0\% & 0.170 & 6250 & 8.8 & 0 \\
\base~Zip-on-Write (root only) & +9.4\% & +21.7\% & -27.7\% & 0.115 & 604 & 1.0 & 0 \\
\method~Continual, no repack & +7.7\% & +4.7\% & -0.8\% & 0.013 & 263 & 1.0 & 0 \\
\rowcolor{SkillZipPaleBlue}\method~Continual + triggered repack & +12.4\% & +2.1\% & +0.0\% & 0.060 & 1875 & 3.3 & 0 \\
\bottomrule
\end{tabular}}
\end{table*}

Continual mode with repacking reaches the full rebuild's objective with zero drift, about one third of the model calls per patch, and far fewer rewritten bytes. Between repacks it reads only the changed file; without repacking, unfactored repetition accumulates. Root-only Zip-on-Write reports a smaller objective only by rewriting routing text that \method locks; Table~\ref{tab:loading} shows the runtime loss.

\begin{figure}[!t]
\centering
\includegraphics[width=\columnwidth]{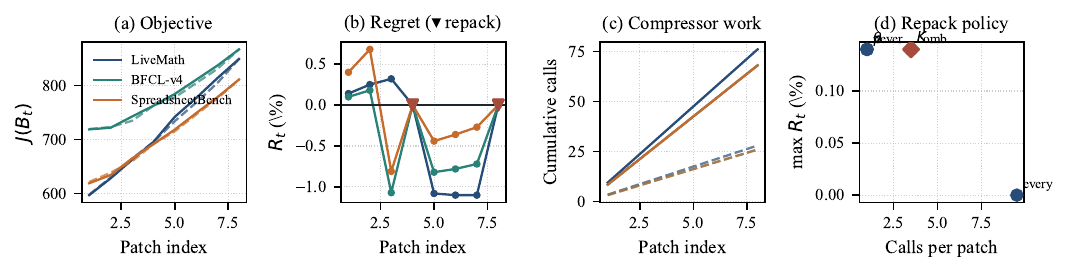}
\caption{Continual mode over patches. (a) objective per patch, full rebuild (solid) versus continual with repack (dashed); (b) drift from the full rebuild, with repack events marked; (c) model calls added up over patches; (d) repack rules, calls against worst drift.}
\label{fig:continual}
\end{figure}

\begin{table}[!t]
\centering
\caption{Repack rules on one patch stream. Repack rate is the fraction of patches rebuilt globally; worst drift spans all checkpoints.}
\label{tab:repack-sensitivity}
\resizebox{\columnwidth}{!}{%
\begin{tabular}{lrrrr}
\toprule
Rule & Repack rate & Worst drift~$\downarrow$ & Calls/patch~$\downarrow$ & Update s~$\downarrow$ \\
\midrule
Never repack & 0.00 & 0.0014 & 1.00 & 0.000 \\
Every patch & 1.00 & 0 & 9.50 & 0.413 \\
Savings only ($\theta$) & 0.00 & 0.0014 & 1.00 & 0.000 \\
Growth only ($\rho$) & 0.00 & 0.0014 & 1.00 & 0.000 \\
Patch count only ($K$) & 0.25 & 0.0014 & 3.50 & 0.119 \\
\rowcolor{SkillZipPaleBlue}Combined policy & 0.25 & 0.0014 & 3.50 & 0.106 \\
\bottomrule
\end{tabular}}
\end{table}

Table~\ref{tab:repack-sensitivity} shows the schedule matters more than any single trigger here. Rebuilding after every patch removes all drift at nearly ten times the per-patch call cost, while never rebuilding still keeps worst drift near a tenth of a percent. The saving and growth triggers never fire on this stream because drift never crosses their thresholds, so the combined rule is driven by the patch counter and lands at a quarter of the rebuild rate with the same bounded drift.

\begin{takeawaybox}
Continual mode matches a full rebuild with about one third of the model calls per patch. A patch-count trigger is sufficient at this scale; the other triggers protect faster-changing workloads.
\end{takeawaybox}

\subsection{The Compression Lifecycle: Persistent vs.\ Transient}
\label{sec:lifecycleexp}

Every result above shipped a rewritten bundle on disk. In the vocabulary of Section~\ref{sec:lifecycle} that is \emph{Persistent} compression: Tables~\ref{tab:main-cost}, \ref{tab:loading}, and \ref{tab:retention} are all persistent results, and we relabel them as such rather than re-run them. What those tables did not test is the second lifecycle---building a throwaway view per run---or the case that makes the two lifecycles diverge: a bundle whose subskills and references are called \emph{directly}, not only from the root. This subsection adds exactly that---and it is where \method's advantage is easiest to see: among all methods we test, only \method{} compresses a multi-entry bundle while keeping \emph{all} of its routing and \emph{every} declared entry independently callable, whereas the lossy baselines reach their smaller numbers only by breaking exactly those two properties.

\textbf{Setup.} We derive one controlled multi-entry bundle from the math skill: a root \texttt{SKILL.md}, a catalogued public subskill at \texttt{sub/SKILL.md}, and four conditional references carrying full task contracts. We compare Uncompressed, \base, the lossy Root-only and Flat-concat baselines, audited persistent \method, persistent \method without the multi-entry audit, and transient execution-view \method. Compression and view construction are deterministic and use no model calls; only the direct-call rollout in Table~\ref{tab:lifecycle-standalone} uses the unmodified agent. Following Section~\ref{sec:lifecyclecosts}, persistent rows report disk and per-run cost. The transient row keeps disk at $1.000$ and reports runtime saving separately.

\subsubsection{Compressing without losing a single public entry}
Table~\ref{tab:lifecycle-cost} separates methods that preserve a multi-entry bundle from those that do not. Standalone-preserving persistent \method reduces the bundle to \hl{0.884} of its bytes and per-run load from 404 to \hl{352} tokens while keeping routing, mean public independence, and worst-case independence at \win{1.000}. The transient view keeps the same guarantees without changing disk. Root-only and flat-concat report smaller numbers only after breaking routing or removing public content. Disabling the multi-entry audit exposes the failure: persistent \method reaches \hl{0.845} on disk but renames the public subskill, reducing its effective independence to \caveat{0.500}. Proposition~\ref{prop:sparseroot} therefore applies at bundle scale: every declared entry, not the desired ratio, sets the safe persistent ceiling.

\begin{table}[t]
\centering
\caption{\textbf{Persistent cost and multi-entry fidelity.} Lower cost is better; higher routing and independence are better. Blue rows are valid; grey rows break routing or independence.}
\label{tab:lifecycle-cost}
\resizebox{\columnwidth}{!}{%
\begin{tabular}{lcccccc}
\toprule
Method & Disk~$\downarrow$ & P95 ctx~$\downarrow$ & Per-run~$\downarrow$ & Route~$\uparrow$ & Pub.\ Ind.~$\uparrow$ & Worst~$\uparrow$ \\
\midrule
Uncompressed & 1.000 & 607 & 404 & 1.000 & 1.000 & 1.000 \\
\base & 0.959 & 587 & 393 & 1.000 & 1.000 & 1.000 \\
\rowcolor{SkillZipPaleGrey}\color{LossyGrey}Root-only\,\xmark & 0.972 & 537 & 242 & 0.000 & 0.625 & 0.250 \\
\rowcolor{SkillZipPaleGrey}\color{LossyGrey}Flat-concat\,\xmark & 0.379 & 190 & 74 & 0.000 & 0.062 & 0.000 \\
\rowcolor{SkillZipPaleBlue}Standalone-preserving Persistent \method & 0.884 & 607 & 352 & 1.000 & \win{1.000} & \win{1.000} \\
\rowcolor{SkillZipPaleGrey}\color{LossyGrey}Persistent \method, no audit\,\xmark & 0.845 & 565 & 331 & 1.000 & 0.500 & 0.000 \\
\rowcolor{SkillZipPaleBlue}Transient Execution-View \method & 1.000 & 607 & 404 & 1.000 & \win{1.000} & \win{1.000} \\
\bottomrule
\end{tabular}}
\end{table}

\subsubsection{Direct calls need the audit or a transient view}
Table~\ref{tab:lifecycle-independence} isolates the failure. Without the audit, persistent \method retains \hl{0.929} of public-entry content but only \caveat{0.500} discoverability because it renames the public subskill. Audited persistent and transient modes remain at \win{1.000} on all four measures: one rejects hidden entries, and the other never edits them. Direct calls need one of these protections.

\begin{table}[t]
\centering
\caption{\textbf{Why a public entry loses independence} (math bundle). ``Discoverable'' is whether a direct call still finds the entry ($\uparrow$); ``Content'' is how much of the found entry's knowledge survives ($\uparrow$); ``Effective'' is their per-entry product, averaged---the number Table~\ref{tab:lifecycle-cost} reports. ``Cond.'' is conditional-entry content. Persistent \method{} with the audit off loses capability by \emph{renaming} a public entry, not by deleting its text.}
\label{tab:lifecycle-independence}
\resizebox{\columnwidth}{!}{%
\begin{tabular}{lcccc}
\toprule
Method & Pub.\ disc.~$\uparrow$ & Pub.\ content~$\uparrow$ & Pub.\ eff.~$\uparrow$ & Cond.\ content~$\uparrow$ \\
\midrule
Uncompressed & 1.000 & 1.000 & 1.000 & 1.000 \\
\base & 1.000 & 1.000 & 1.000 & 1.000 \\
\rowcolor{SkillZipPaleGrey}\color{LossyGrey}Root-only\,\xmark & 1.000 & 0.625 & 0.625 & 1.000 \\
\rowcolor{SkillZipPaleGrey}\color{LossyGrey}Flat-concat\,\xmark & 1.000 & 0.062 & 0.062 & 0.150 \\
\rowcolor{SkillZipPaleGrey}\color{LossyGrey}Persistent \method, no audit\,\xmark & 0.500 & 0.929 & 0.500 & 0.887 \\
\rowcolor{SkillZipPaleBlue}Standalone-preserving Persistent \method & \win{1.000} & 1.000 & \win{1.000} & 1.000 \\
\rowcolor{SkillZipPaleBlue}Transient Execution-View \method & \win{1.000} & 1.000 & \win{1.000} & 1.000 \\
\bottomrule
\end{tabular}}
\end{table}

The live rollout confirms this result. Table~\ref{tab:lifecycle-standalone} calls each entry through the unmodified agent. Without the audit, persistent \method cannot start the renamed public subskill (discoverability and standalone success are both $0.000$), although unrenamed conditional references still run. Audited persistent and transient modes keep every entry discoverable; their standalone and root-mediated success matches uncompressed within this small rollout's noise ($n{=}3$ for the public entry, $n{=}8$ per reference). Discoverability is deterministic: only the audit or a transient view prevents cross-file deduplication from hiding a public entry.

\begin{table}[t]
\centering
\caption{\textbf{Direct-call rollout.} Pub. disc. tests public-subskill startup; Pub., Cond., and Root report success by entry type. Discoverability is deterministic; $n{=}3$ public and $n{=}8$ per reference.}
\label{tab:lifecycle-standalone}
\resizebox{\columnwidth}{!}{%
\begin{tabular}{lcccc}
\toprule
Method & Pub.\ disc.~$\uparrow$ & Pub.~$\uparrow$ & Cond.~$\uparrow$ & Root~$\uparrow$ \\
\midrule
Uncompressed & 1.000 & 0.333 & 0.375 & 0.364 \\
\rowcolor{SkillZipPaleGrey}\color{LossyGrey}Persistent \method, no audit\,\xmark & 0.000 & 0.000 & 0.417 & 0.394 \\
\rowcolor{SkillZipPaleBlue}Standalone-preserving Persistent \method & \win{1.000} & 0.333 & 0.375 & -- \\
\rowcolor{SkillZipPaleBlue}Transient Execution-View \method & \win{1.000} & 0.667 & 0.417 & -- \\
\bottomrule
\end{tabular}}
\end{table}

\subsubsection{Matching the four modes to update rate and call pattern}
Table~\ref{tab:lifecycle-sensitivity} reports each workload boundary. Persistent load stays below transient load at every direct-call count because compression is paid once; stable, heavily used bundles favor \emph{one-shot persistent}. A global repack costs 273\,ms versus 183\,ms for a local transient rebuild, so changing direct-call bundles favor cached \emph{continual transient} above roughly one edit per ten runs. Figure~\ref{fig:lifecycle} shows the crossover.

\begin{table*}[t]
\centering
\caption{\textbf{Lifecycle sensitivity.} Each row reports the measured boundary at which one lifecycle becomes cheaper or safer.}
\label{tab:lifecycle-sensitivity}
\resizebox{\textwidth}{!}{%
\begin{tabular}{@{}llp{0.34\textwidth}p{0.30\textwidth}@{}}
\toprule
Sweep & Range & Measured effect & Verdict \\
\midrule
Public-node fraction & 0.10$\to$1.00 & standalone keeps 0.10$\to$1.00 of nodes & more public $\Rightarrow$ less persistent deletion room \\
Direct-call frequency $N$ & 1$\to$50 calls & persistent load $<$ transient load at all $N$ & winner: persistent-standalone \\
Cross-file redundancy & 0.35$\to$0.41 dup. & persistent deploy saving up to 28.1\% & more redundancy $\Rightarrow$ larger permanent saving \\
Bundle size (tokens) & 548$\to$879 & cold view build 170$\to$228 ms & build cost is per-run for transient, one-time for persistent \\
Update frequency $U$ & 0$\to$1 edits/run & repack 273 ms vs rebuild 183 ms at $U{=}1$ & transient wins once $U>0.1$ \\
Workload drift $d$ & 0$\to$0.8 & persistent saving 12.9\%$\to$2.6\% vs transient 4.7\% (flat) & transient wins for $d\geq0.8$ \\
\bottomrule
\end{tabular}}
\end{table*}

\begin{figure*}[t]
\centering
\includegraphics[width=0.9\textwidth]{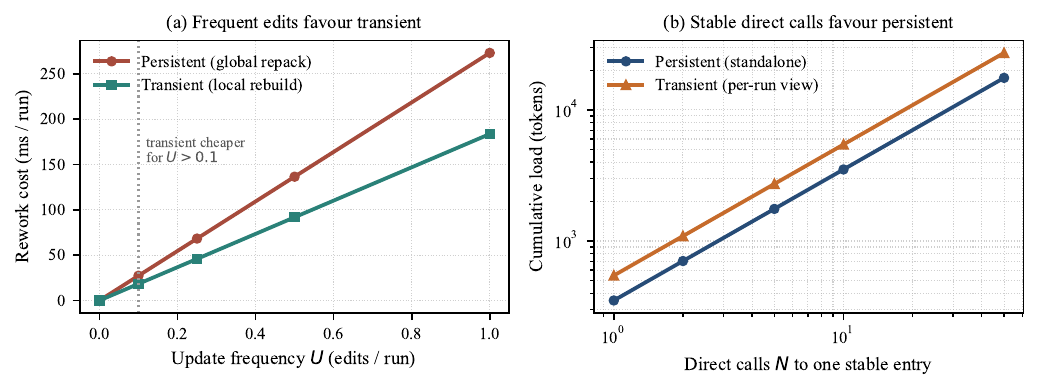}
\caption{\textbf{Which lifecycle is cheaper depends on the workload}; both panels are drawn from the deterministic sweeps of Table~\ref{tab:lifecycle-sensitivity}. \textbf{(a)} As the edit rate $U$ rises, a persistent global repack (which re-audits the whole bundle) costs more rework per run than a transient local rebuild (which touches only the affected view); they tie for a stable bundle ($U{=}0$), and transient is cheaper for $U>0.1$. \textbf{(b)} For repeated direct calls to one stable public entry, a standalone-preserving persistent bundle pays its compression once and stays below a per-run transient view at every call count $N$ (log--log axes). Read together: a stable, heavily called bundle wants persistent; a frequently edited one wants transient.}
\label{fig:lifecycle}
\end{figure*}

\subsubsection{What a transient view costs to build and cache}
Transient compression has overhead (Table~\ref{tab:lifecycle-transient}). The root-heavy entry saves \win{24.8\%}; leaf references save under 2\% because their closures share little content. Cold builds take 150--245\,ms and warm reads under 2\,ms, amortizing to 15--25\,ms per call at a 0.9 hit rate. The non-improving \texttt{geometry.md} entry uses its raw closure, and the canonical bundle remains byte-identical. A view helps only when its token saving exceeds build overhead.

\begin{table*}[t]
\centering
\caption{\textbf{Transient-view cost.} Closure and View are tokens before and after compression. Cold, Warm, and Amort. report build latency; Built marks a smaller view, and Canon. confirms that the canonical bundle is unchanged.}
\label{tab:lifecycle-transient}
\resizebox{\textwidth}{!}{%
\begin{tabular}{@{}lccccccccc@{}}
\toprule
Entry & Kind & Closure & View & Run saving~$\uparrow$ & Cold ms~$\downarrow$ & Warm ms~$\downarrow$ & Amort.\ ms~$\downarrow$ & Built & Canon. \\
\midrule
\texttt{SKILL.md} & publ & 493 & 371 & \win{24.8\%} & 218 & 0.34 & 22.1 & \cmark & \cmark \\
\texttt{sub/SKILL.md} & publ & 548 & 545 & 0.5\% & 173 & 1.53 & 18.7 & \cmark & \cmark \\
\texttt{references/algebra.md} & cond & 532 & 529 & 0.6\% & 155 & 0.23 & 15.7 & \cmark & \cmark \\
\texttt{references/edge\_cases.md} & cond & 513 & 505 & 1.6\% & 153 & 0.29 & 15.5 & \cmark & \cmark \\
\texttt{references/geometry.md} & cond & 489 & 489 & 0.0\% & 244 & 0.54 & 24.9 & \xmark & \cmark \\
\texttt{references/number\_theory.md} & cond & 483 & 478 & 1.0\% & 157 & 0.56 & 16.2 & \cmark & \cmark \\
\bottomrule
\end{tabular}}
\end{table*}

\subsubsection{Which safeguards are load-bearing}
Table~\ref{tab:lifecycle-ablation} removes one safeguard at a time. Without entry labels, public independence falls to $0.500$; without the multi-entry audit, a public entry becomes undiscoverable. Ignoring dependency closure drops 10 of 11 references, and missing host context reduces conditional independence to $0.887$. Removing the view cache raises per-call build from 18.7 to 173.4\,ms; removing global repacking moves the disk ratio from $0.845$ to $0.959$. Under workload drift, persistent saving decays from 12.9\% to 2.6\%, while the transient view remains at 4.7\%. Each safeguard therefore protects either a saving or a public call.

\begin{table*}[t]
\centering
\caption{\textbf{Lifecycle ablations.} Each row removes one safeguard and reports the protected metric and resulting failure.}
\label{tab:lifecycle-ablation}
\resizebox{\textwidth}{!}{%
\begin{tabular}{@{}lp{0.30\textwidth}ccp{0.30\textwidth}@{}}
\toprule
Removed safeguard & Metric & Intact & Ablated & What breaks \\
\midrule
$-$ Entry-type labels & mean public independence (math/structured) & 1.000 & 0.500 & public entries lose content the root happens to cover \\
$-$ Multi-entry audit & all public entries discoverable (0/1) & 1.000 & 0.000 & a public sub-entry is renamed and no longer callable \\
$-$ Dependency closure & dependency files reachable from the entry in its view & 11 & 1 & the view omits every reference the entry routes to \\
$-$ Host-context supplement & mean conditional-entry independence (math/structured) & 1.000 & 0.887 & content assumed covered by the host is deleted, then truly lost \\
$-$ View cache & mean build ms/call over 20 calls & 18.7 & 173.4 & every call rebuilds the same view from scratch \\
$-$ Global repack & deploy token ratio, lower is better (math/structured) & 0.845 & 0.959 & cross-file duplication is never removed; only per-file gains remain \\
\bottomrule
\end{tabular}}
\end{table*}

\begin{takeawaybox}
\method is the only tested approach that compresses a multi-entry bundle without breaking routing or hiding a public entry. Audited persistent compression gives the smallest bundle and lowest steady-state run cost; transient compression preserves the canonical bundle and better tolerates drift, but pays per-run build cost. The lifecycles are complementary, and each safeguard protects either fidelity or efficiency.
\end{takeawaybox}

\subsection{An Example in Industrial Compression Deployment}
\label{sec:industrial}

The preceding benchmarks use an instrumented wrapper to record which files the agent loads. We further evaluate \method inside the production execution environment of a content-moderation service. Its skill bundle contains a root document of approximately \textbf{20{,}000 tokens}, three on-demand reference documents covering prior cases and moderation standards, and a locked risk map. The root is injected when the skill is activated, while reference files are loaded only when needed. Each audit involves approximately \textbf{13 agent reasoning rounds} on average.

The official evaluation set contains \textbf{100 real moderation tasks}, evenly divided between violating and normal cases. Task content is retrieved at execution time and is not available locally, so both decision quality and runtime cost must be measured through the production harness. We report the binary moderation verdict using accuracy and false positives. The bundle is written in Chinese, which also exposes a practical limitation of language-specific protection rules: an English-only deterministic extractor identifies only \caveat{7 of 264} instruction units as required, leaving most Chinese obligations unprotected during compression.

\textbf{Aggressive compression removes essential exemption knowledge.}
Without protection classes, deterministic and model-assisted compression reduce the deployed bundle by \hl{71.4\%} and \hl{75.8\%}, respectively. However, accuracy falls from \textbf{88.00\%} for the paired uncompressed bundle to \caveat{70.00\%} and \caveat{62.00\%}. The errors are strongly asymmetric: false positives increase from 10 to 29--35, while false negatives remain between 1 and 3. An audit of the removed content shows that most losses occur in \emph{exemption rules} specifying when content should not be flagged, including authorized-seller cases, whitelisted exceptions, and evidence thresholds. As fewer of these rules are retained, the false-positive rate increases.

Unprotected compression also damages the presentation of the output interface. Two worked JSON examples and the label whitelist are separated into synthetic sections approximately sixty lines apart, while renderer scaffolding remains in the root. Although the individual fragments still exist, neither the downstream parser nor the executing model receives the interface as one coherent contract.

\textbf{Witnessed compression protects semantic and interface-critical content.}
We recompress the identical bundle while introducing the safeguards of Section~\ref{sec:witnesses} incrementally. Exemption-bearing units are locked as class C1; obligation-bearing units are restored to the required class C2, so they can be modified only through witnessed transformations; and the output interface is re-emitted as one contiguous, verbatim C0 span. A removal is accepted only when supported by a logged witness: $W_1$ for literal containment, $W_2$ for coverage-preserving transformations, or $W_3$ for entailment within evidence segments that contain no boundary marker.

Because the production harness may vary between sessions, each compressed configuration is evaluated against an uncompressed bundle in the \emph{same session}. The first uncompressed row in Table~\ref{tab:industrial} therefore reports 88.00\%, while the two sessions used by the final configuration have uncompressed baselines of 92.00\% and 91.00\%. For reference, the standalone official evaluation of the same uncompressed bundle reports 90.91\%. Table~\ref{tab:industrial} shows how compression changes as the protection mechanisms are introduced.

\begin{table}[!t]
\centering
\caption{Production evaluation under the service's native execution harness. Deployment saving is measured over the shipped bundle. Per-run saving is task-paired within the same session over 100 tasks; the final row pools two sessions ($n{=}200$). Accuracy and false positives are measured on the corresponding full evaluation set. The relevant uncompressed session baselines are 88.00\%, 92.00\%, and 91.00\%.}
\label{tab:industrial}
\resizebox{\columnwidth}{!}{%
\begin{tabular}{lccccc}
\toprule
Configuration &
Deployment~$\uparrow$ &
Measured per run~$\uparrow$ &
Accuracy~$\uparrow$ &
FP~$\downarrow$ &
Interface contract~$\uparrow$ \\
\midrule
Uncompressed
& +0.0\% & +0.0\% & 88.00\% & 10 & contiguous \\

\rowcolor{SkillZipPaleGrey}
\color{LossyGrey}
Pro, no protection classes\,\xmark
& +71.4\% & -- & \gone{70.00\% / 62.00\%}
& \gone{29 / 35} & contiguous \\

Pro $+$ C1/C2 ($W_1,W_2$ only)
& +13.8\% & +7.1\% & 89.00\% & 9
& \caveat{fragmented} \\

Pro $+$ $W_3$ entailment witness
& +32.7\% & +6.8\% & 88.00\% & 10
& \caveat{fragmented} \\

Pro $+$ $W_3$ $+$ C0 contract restore
& +32.2\% & +6.1\% & 89.00\% & 9
& \win{contiguous} \\

\rowcolor{SkillZipPaleBlue}
Pro $+$ $W_3$ inside the root (v3)
& \hl{+38.1\%}
& \hl{+10.4\%} ($n{=}200$)
& \win{89.00\%}
& \win{9}
& \win{contiguous} \\
\bottomrule
\end{tabular}}
\end{table}

The incremental comparison supports four findings.

\textbf{First, stronger witnesses expand the amount of content that can be removed safely.}
With C1/C2 protection and only $W_1$ and $W_2$, \method reduces the deployed bundle by \emph{13.8\%} and achieves \win{89.00\%} accuracy, one task above the paired uncompressed result, with one fewer false positive. Further candidates cannot pass these deterministic checks because their redundancy is semantic rather than literal. Adding the restricted $W_3$ entailment witness increases deployment saving to \hl{32.7\%}. The resulting accuracy and confusion matrix are \textbf{identical} to those of the paired uncompressed bundle. Each of the 106 removed reference segments has a logged entailment verdict, and none changes a decision on the 100-task evaluation. The increase in saving therefore comes from stronger evidence for removal, not from relaxing the acceptance criteria.

\textbf{Second, interface integrity must be evaluated separately from task accuracy.}
The configuration using $W_3$ alone leaves the output contract \caveat{fragmented} because the fixed-template renderer distributes its components across separate sections. The C0 restoration pass reconstructs a single contiguous span containing both worked examples and the label whitelist, while removing residual scaffolding. This repair reduces deployment saving only slightly, from 32.7\% to 32.2\%. Accuracy changes from 88.00\% to 89.00\%, which is insufficient to attribute an accuracy benefit to contiguity alone. The relevant result is instead that the published bundle once again exposes the complete contract in the form expected by its consumer, and that this property is now checked explicitly.

\textbf{Third, compressing the repeatedly loaded root produces the largest runtime benefit.}
Because the root is processed throughout approximately 13 reasoning rounds, removing one root token saves roughly five to six times as many measured runtime tokens as removing one token from an on-demand reference. We therefore allow restricted $W_3$ witnesses inside the root, while continuing to protect frontmatter, interface contracts, and boundary-bearing segments. Two candidate deletions are also prohibited from serving as witnesses for each other. This configuration removes \hl{38.1\%} of the deployed bundle and reduces pooled, task-paired runtime tokens by \hl{10.4\%} over $n{=}200$ audits. Both compressed runs achieve \win{89.00\%} accuracy, compared with same-session uncompressed baselines of 92.00\% and 91.00\%.

\textbf{Fourth, static cost estimates do not fully capture changes in agent execution.}
The loading-weight model of Section~\ref{sec:problem} predicts approximately \emph{3\%} per-run saving, whereas paired production measurements range from \hl{6\%} to \hl{11\%}. A shorter root changes not only the number of loaded tokens but also the agent's reading and repeated-reasoning behavior, neither of which is represented by the static model. A 30-task pilot varies from 6\% to 14\% across days, while two 100-task paired runs produce 9.5\% and 11.3\%, yielding the pooled result of \hl{10.4\%}. The static model is conservative in this deployment, but runtime claims should be based on paired measurements collected within the same execution session.

\begin{takeawaybox}
On the production moderation skill, witnessed compression removes \win{38.1\%} of the deployed bundle and \win{10.4\%} of pooled, task-paired runtime tokens while keeping accuracy within the observed variation of the uncompressed system. In contrast, the unprotected 71.4--75.8\% configurations lose \caveat{18--26 accuracy points}, primarily through additional false positives.
\end{takeawaybox}

\begin{insightbox}{5}
\textbf{The safe compression limit is determined by the evidence available for each removal.}
Literal and coverage witnesses provide only \emph{13.8\%} deployment saving, while restricted entailment witnesses raise it to \win{32.7\%}; extending the same witnessed reasoning to the repeatedly loaded root raises the final saving to \win{38.1\%}. The benefit is also amplified at execution time: although the static loading model predicts only \emph{3\%} per-run saving, the production engine measures \hl{6--11\%} because shorter root context reduces work across repeated reasoning rounds. Industrial compression should therefore be both \emph{proof-bound} and \emph{execution-aware}: remove content only when its redundancy is witnessed, and prioritize the context that the agent repeatedly processes.
\end{insightbox}

\section{Conclusion}
\label{sec:conclusion}

\method compresses the progressively loaded skill bundles used by production agents, including instructions, references, subskills, code, data, and assets. Host entailment, activation-scoped sharing, and conditional capsules remove cross-file redundancy without increasing activation or path cost. Before publication, a transactional disk audit verifies routing, typed contracts, scope, interface integrity, and locked content.

The compiler supports two independent choices. \textbf{One-Shot} compression rebuilds the complete resource graph, whereas \textbf{Continual} compression processes only the closure affected by each evolution patch. \textbf{Persistent} compression rewrites the canonical bundle, while \textbf{Transient} compression preserves it and constructs a per-run execution view. These modes support different update frequencies and requirements for the independent usability of references and subskills, while remaining compatible with Phase-A deployment.

Across three benchmarks and a real industrial deployment, \method reduces all four cost layers while preserving every routing pair and introducing no reference errors. Triggered Continual repacking achieves a result comparable to One-Shot rebuilding with approximately \win{one third} as many model calls per patch. In production, verification-gated compression removes \win{38.1\%} of the deployed bundle and \win{10.4\%} of measured per-run tokens without a measurable loss in decision quality. These results show that effective skill compression must preserve not only content, but also the loading structure through which agents access it.

\FloatBarrier
\bibliographystyle{IEEEtran}
\bibliography{references}

\clearpage
\onecolumn
\setlength{\parfillskip}{0pt plus 0.76\textwidth}
\appendices
\section{Implementation Details}
\label{app:implementation}

\begin{roadmapbox}
This appendix fixes the implementation contract needed to reproduce Phase A. It specifies bundle discovery, reference resolution, candidate layout, the cost ledger, auditing, atomic publication, and Zip-on-Write invalidation. Appendix~\ref{app:schema} gives machine-readable schemas and prompt contracts; Appendix~\ref{app:protocols} gives the run matrix; Appendix~\ref{app:analysis} expands theory, limitations, and failure analysis.
\end{roadmapbox}

\subsection{Reference Directory Layout}

The implementation treats the authoritative source directory as immutable during a transaction and creates all intermediate and persistent compiler state outside the published runtime bundle. A successful publication uses the following split layout:

\begin{lstlisting}[style=compactjson]
<published-bundle>/
  SKILL.md                   # unchanged entry path
  references/...
  subskills/.../SUBSKILL.md
  capsules/<guard-slug>.md   # generated on-demand text
  .skillzip_shared/<hash>.md # generated scoped modules
  scripts/...                # byte-identical
  data/...                   # byte-identical
  assets/...                 # byte-identical

<compiler-cache>/<bundle-id>/
  state.json                 # graph, indices, workload ledger
  manifest.json              # provenance and audit results
  contracts/<digest>.json    # reusable typed contracts
  authored.snapshot          # optional transactional source snapshot
\end{lstlisting}

Generated runtime directories are namespaced to avoid collisions. If either namespace already exists in the source, the tool selects a digest-suffixed namespace and records it in the external manifest. Nested source \texttt{SKILL.md} files are treated as subskill entries internally; the published filename is changed only when the original harness accepts \texttt{SUBSKILL.md}. The default is path preservation. Compiler state is not distributed to the agent and is therefore not charged to runtime deployment cost; a reproducibility package that includes it reports payload bytes and total archive bytes separately.

\subsection{Discovery and Media Classification}

Directory traversal is deterministic: paths are normalized to UTF-8 NFC, sorted bytewise, and visited without following directory symlinks. Content signatures and extensions classify regular files. Declared text types support contract extraction; code, notebooks, structured data, images, archives, and unknown binaries remain locked.

Every node stores:
\begin{itemize}
  \item canonical path relative to the bundle root;
  \item SHA-256 of source bytes and media classification;
  \item token and byte counts;
  \item loading class and inbound/outbound edges;
  \item whether rewrite, relocation, or only verbatim copy is permitted.
\end{itemize}

The default text-size guard is 1 MiB per file. Larger text remains locked unless the user explicitly raises the limit. Archive members are never expanded implicitly.

\subsection{Reference Resolution}

The Phase-A resolver recognizes only syntax that the host agent can already follow:
\begin{enumerate}
  \item Markdown links and images with relative local targets;
  \item explicit code-formatted paths ending in a known extension;
  \item front-matter fields declared by the selected skill format;
  \item imperative loading clauses such as ``read \texttt{references/csv.md}.''
\end{enumerate}
Fragments and query strings are separated before filesystem resolution and restored in the emitted link. Percent decoding is performed once; double decoding is forbidden. The canonical target must have the canonical root as a path-component prefix, not merely a string prefix.

The nearest enclosing heading and conditional clause define the initial guard. If guard extraction is ambiguous, the edge is marked \texttt{unknown}; it may be preserved but cannot justify capsule creation or path-specific sharing. External \texttt{http}, \texttt{https}, and \texttt{mailto} links are preserved as opaque strings and excluded from local closure checks.

\subsection{Contract Extraction and Normalization}

Files are chunked only at heading boundaries, with a 256-token overlap carrying the parent heading and active guard. Each chunk is extracted once. Units are merged by source span and normalized key; disagreements retain the stricter wording as a locked residual. The implementation never resolves a disagreement by majority vote.

Normalization is type specific. Tool calls normalize tool name, required arguments, and order. Output obligations normalize field name, type, and cardinality. Prohibitions preserve polarity. Workflow units form a directed graph whose edges encode \texttt{before}, \texttt{after}, \texttt{retry}, or \texttt{fallback}. Examples are tagged as non-normative unless the source explicitly declares them required.

\subsection{Protection Classes, Witnesses, and Non-Latin Bundles}

The production configuration in Section~\ref{sec:industrial} adds three implementation requirements.

\textbf{Script-aware lexical signals.} Modality and requirement detection are lexical: obligation, prohibition, and recommendation markers drive which units are required. The reference implementation ships with English marker sets, which on a Chinese bundle demotes almost every obligation to optional (7 of 264 units required on the production skill). The fix is script-aware signal sets---obligation and prohibition markers per writing system---applied in the same deterministic extractor, so required-unit detection works before any witness is consulted. The English path is unchanged byte for byte.

\textbf{Boundary classes and the exemption lock.} Units whose payload carries an exemption or negative-verdict marker (authorized cases, whitelisted exceptions, ``judge normal'' clauses) are classified as boundary units. Boundary units are required and additionally locked against wording compression and against $W_3$ entailment removal: they may only be moved verbatim. Markers are a configurable lexicon per language; the audit counts them so a release reports boundary retention explicitly.

\textbf{Interface-contract restoration.} After materialization, a post-pass locates each source section matching the interface-contract pattern (output-format headings with fenced schema examples and whitelist constraints), strips any of its normalized lines that the fixed-template render scattered elsewhere in the emitted root, removes render-scaffolding sentences, and re-inserts the whole source span verbatim at one site. The audit then verifies contiguity: the section must appear once, whole, with no fragments outside it.

\textbf{Entailment witness protocol ($W_3$).} The checker prompt presents the emitted root document $S$ and one candidate reference segment $P$, and asks whether the decision-relevant content of $P$---conditions, verdicts, thresholds, whitelists, exemptions---is fully expressed in $S$, explicitly instructing that concrete case facts need not reappear but the rule they demonstrate must. The checker runs at temperature $0$ with a fixed prompt; the first response line must be the verdict and the second a one-sentence basis. A verdict is accepted only for evidence-class segments containing no boundary marker, and the verdict, basis, segment coordinates, and prompt digest are written to the witness log, making every $W_3$ removal individually auditable after publication.

\textbf{Intra-root witnesses and their safeguards.} $W_3$ may also be applied inside the root document, where each candidate is checked against the root \emph{minus itself}. Three safeguards apply. (i) The YAML frontmatter and every interface-contract span are exempt from $W_3$ regardless of verdict: they carry no decision knowledge a checker can weigh, yet they are the loading and output contract. (ii) Boundary segments remain exempt. (iii) A mutual-witness veto: after verdicts are collected, any two approved segments whose normalized bigram sets overlap above $0.6$ are both retained, since each may have received its verdict only because the other was still present in the base. Because the root is injected into every reasoning round, intra-root removals carry the largest per-run multiplier, and the veto plus the exemptions are what keep that lever safe.

\subsection{Candidate Enumeration}

File-level candidates are primitive statements, shared rules, named procedures, guarded exceptions, and locked residual spans. Bundle-level candidates are exact host witnesses, shared modules, and capsules.

Cross-file sharing uses a two-stage test. A hash over canonical typed units finds exact candidates; a deterministic structural comparator then verifies identical type, payload, guard parameters, and normative strength. Parameterized sharing is allowed only when differing values correspond to explicit source variables and the resulting call sites preserve them. Pure embedding similarity never creates an automatic shared module.

For each support set, the algorithm considers the lowest common activation scopes in the resource graph rather than only the directory ancestor. A scope candidate is discarded if it would be loaded by a path that previously loaded none of the occurrences, unless the added path cost is compensated under Eq.~\eqref{eq:objective} and the rule is proven universally applicable. The production default disables this exception and requires no scope expansion.

Capsule candidates require all of the following: an explicit guard, a body above the minimum length, no unguarded inbound dependency from adjacent prose, and a dispatcher whose mandatory loading instruction fits within the configured budget. Headings such as ``Background'' or ``Notes'' do not constitute guards.

\subsection{Selection and Cost Ledger}

Candidate deltas use tokenized emitted text, including links, dispatcher verbs, generated headings, and manifest-excluded runtime content. Selection starts with the best coverage-preserving improvement, then tests removals and one-for-one or one-for-two swaps until $J$ cannot improve by one token.

The ledger stores both estimated and materialized values:
\begin{lstlisting}[style=compactjson]
{
  "tokenizer": {"name": "...", "revision": "..."},
  "lambda_deployment": 0.05,
  "path_weights": {"source": "trace|guards", "digest": "..."},
  "source": {"catalog": 0, "activation": 0,
             "deployment": 0, "mean_path": 0, "p95_path": 0},
  "candidate": {"catalog": 0, "activation": 0,
                "deployment": 0, "mean_path": 0, "p95_path": 0},
  "objective_delta": 0
}
\end{lstlisting}
Estimated and materialized costs must agree exactly for bytes and within tokenizer determinism for tokens. A discrepancy rejects the candidate.

\subsection{Disk-Level Audit}

The audit runs in a new process with only the source path, candidate path, and manifest as inputs. This prevents it from trusting optimizer objects accidentally. It rebuilds both graphs, rehashes every node, and verifies:
\begin{enumerate}
  \item source and candidate roots are distinct and canonical;
  \item all internal candidate targets exist, remain inside the root, and preserve fragments;
  \item every source contract unit maps to a compatible candidate unit or exact host witness;
  \item every generated shared module or capsule is reachable through a mandatory dispatcher under a compatible guard;
  \item locked-file path and SHA-256 pairs match, except for explicitly user-approved path migrations;
  \item no source file has been omitted merely because it was unreferenced;
  \item every interface contract appears as one contiguous span equal to its source section, with no scattered fragments;
  \item materialized costs do not exceed the source objective.
\end{enumerate}

Strict mode treats any warning as failure. Diagnostic mode emits the candidate for inspection but never publishes it as a successful compression.

\subsection{Atomic Publication and Recovery}

The tool creates a temporary sibling directory so that final rename remains on one filesystem. It fsyncs files, the manifest, and containing directories before publication. If the destination exists, the default command fails; explicit \texttt{--replace} first renames the old output to a timestamped backup. The source is never overwritten.

On one-shot extraction, selection, materialization, or audit failure, the official output is a verbatim copy of the source with an external failure manifest. On a continual optimization failure after a patch has been applied successfully, the official output is the verbatim patched bundle. Only a failure to apply or validate the patch transaction itself leaves the previous publication active. A temporary directory may be retained for debugging only under an explicit flag. This behavior makes failure visible while preserving the current authored semantics.

\subsection{Dual-Mode Driver and Bundle-Aware Zip-on-Write}

The command exposes \texttt{--mode one-shot} and \texttt{--mode continual}. One-shot requires only the source and entry path; it rebuilds all compiler state. Continual additionally requires the previous bundle identifier, state digest, and an incoming patch. A missing, corrupt, or source-mismatched state never causes an unsafe partial update: the driver applies the patch verbatim and switches to one-shot compression of that current authored state.

An incoming patch identifies added, modified, moved, and deleted paths. Before optimization, the tool creates a raw patched snapshot and validates patch preconditions and expected hashes. The invalidation closure contains changed nodes, their reference ancestors, newly or formerly referenced descendants, shared modules whose support sets changed, capsules whose guard spans changed, and environment witnesses whose digest changed. Extraction and candidate generation rerun only on this closure. Global reference, coverage, and hash checks still scan the full candidate, while unchanged file contracts are reused by source digest.

Each patch unit is classified as \textsc{Absorb}, \textsc{Refine}, \textsc{Extend}, or \textsc{Refactor}. The first three update semantic content. \textsc{Refactor} may also change placement when a support set, guard, or path weight crosses the sharing or capsule threshold. The driver estimates recoverable saving from stale candidates and invokes one-shot repacking when any configured trigger is reached: $\theta_{\mathrm{repack}}$ saving, $\rho$ growth, $\delta_W$ workload drift, environment-digest change, or $K$ patches. Trigger values and causes are logged in the state transition.

Deletion is especially conservative. A shared module is deleted only after all inbound dispatchers are removed and the recomputed coverage graph shows no remaining source unit depends on it. If compression of a valid raw patch fails, the raw patched snapshot is atomically published and a clean state is rebuilt from it. Thus a patch can reduce compression temporarily but cannot disappear because the optimizer rejected its rewrite.



\section{Schemas and Model Contracts}
\label{app:schema}

This appendix records the structured interfaces used by the extractor and rewriter. Prompts are templates; implementations should pin the system message, model snapshot, JSON validator, and retry policy.

\subsection{Resource-Graph Record}

\begin{lstlisting}[style=compactjson]
{
  "bundle_digest": "sha256:...",
  "entry": "SKILL.md",
  "nodes": [{
    "id": "n17",
    "path": "references/csv.md",
    "sha256": "...",
    "media_type": "text/markdown",
    "kind": "reference",
    "locked": false,
    "token_count": 412
  }],
  "edges": [{
    "source": "n1", "target": "n17",
    "source_span": [88, 126],
    "syntax": "markdown_link",
    "guard": {"text": "when exporting CSV", "status": "explicit"}
  }],
  "external_edges": [],
  "unresolved": []
}
\end{lstlisting}

Line/byte offsets are measured against immutable source bytes. A graph containing an unresolved local reference cannot enter strict compression.

\subsection{Typed Contract Record}

\begin{lstlisting}[style=compactjson]
{
  "resource_id": "n17",
  "units": [{
    "unit_id": "n17:u4",
    "type": "output_obligation",
    "key": "csv.encoding",
    "payload": {"value": "UTF-8", "strength": "must"},
    "scope": {"resource": "n17", "guard": "export CSV"},
    "source_spans": [[241, 274]],
    "confidence": "high",
    "locked": false
  }],
  "relations": [{
    "kind": "before",
    "source_unit": "n17:u2",
    "target_unit": "n17:u4",
    "source_spans": [[178, 274]]
  }]
}
\end{lstlisting}

Allowed unit types are \texttt{interface}, \texttt{workflow}, \texttt{tool}, \texttt{rule}, \texttt{prohibition}, \texttt{output\_obligation}, \texttt{evidence}, and \texttt{locked\_residual}. The validator rejects unknown types and units without source spans.

\subsection{Environment Contract and Witness}

\begin{lstlisting}[style=compactjson]
{
  "environment_digest": "sha256:...",
  "guarantees": [{
    "type": "output_obligation",
    "key": "csv.encoding",
    "value": "UTF-8",
    "scope": "all_spreadsheet_exports",
    "enforced_by": "serializer/v3",
    "evidence_digest": "sha256:..."
  }]
}
\end{lstlisting}

A removal witness stores the source unit, guarantee index, exact comparison result, and environment digest. Natural-language descriptions of the host are not accepted as contracts.

\subsection{Extraction Prompt Contract}

\begin{promptbox}[Contract extraction]
You are extracting an auditable behavioral contract from ONE resource in an agent-skill bundle.

Inputs: canonical resource path; ancestor headings; explicit loading guard; immutable source text with byte offsets; resource-graph neighbors.

Return JSON matching the supplied schema. Extract every applicability condition, ordered workflow step, tool/resource requirement, normative rule, prohibition, output obligation, verification/evidence requirement, and ambiguous residual. Attach exact source spans. Preserve polarity, strength, and guard. Mark uncertain or non-decomposable text as locked\_residual.

Do not rewrite the resource. Do not infer requirements from world knowledge. Do not treat an example as normative unless the source explicitly does. Do not invent references or guards. Output JSON only.
\end{promptbox}

\subsection{Candidate Rewrite Prompt Contract}

\begin{promptbox}[File rewrite]
Rewrite the supplied Markdown resource using ONLY the selected representation plan and mapped source units.

Hard requirements:
(1) preserve every selected typed unit at the specified scope and normative strength;
(2) preserve every locked residual verbatim;
(3) emit each provided relative path exactly;
(4) for a capsule/shared module, keep the supplied explicit guard and mandatory read instruction;
(5) do not add tools, claims, examples, branches, or host assumptions;
(6) return one Markdown file with no commentary.

If the plan is inconsistent or omits a mapped unit, return PLAN\_REJECTED with the unit IDs.
\end{promptbox}

The implementation does not rely on the rewriter to calculate coverage. It re-extracts and audits the emitted text.

\subsection{Audit Prompt Contract}

Most audit checks are deterministic. A model is used only for semantic coverage pairs not discharged by exact normalization.

\begin{promptbox}[Semantic coverage adjudication]
Given one source unit and one candidate unit, decide whether the candidate entails the full source requirement under the same or narrower compatible scope.

Return: COVERED, NOT\_COVERED, or UNCERTAIN; a type match; a polarity/strength match; a scope match; and the minimal text spans supporting the decision.

Rules: examples do not cover obligations; recommendations do not cover MUST; broader applicability cannot be assumed from a narrower branch; external knowledge and model plausibility are forbidden. UNCERTAIN is treated as NOT\_COVERED by strict mode. Output JSON only.
\end{promptbox}

\subsection{Transformation Manifest}

\begin{lstlisting}[style=compactjson]
{
  "format_version": "skillzip-pro/1",
  "phase": "A",
  "source_digest": "sha256:...",
  "output_digest": "sha256:...",
  "models": {"extractor": "...", "rewriter": "...", "auditor": "..."},
  "graph": {"nodes": 0, "edges": 0, "unresolved": 0},
  "transformations": [
    {"kind": "host_entailment", "units": [], "witness": "..."},
    {"kind": "scoped_share", "units": [], "scope": [], "path": "..."},
    {"kind": "capsule", "units": [], "guard": "...", "path": "..."},
    {"kind": "file_cover", "resource": "...", "units": []}
  ],
  "cost_ledger": {},
  "audit": {"passed": true, "checks": [], "fallback": false}
}
\end{lstlisting}

The manifest is not required at agent runtime. It exists for provenance, regression testing, and exact reconstruction of experimental measurements.

\section{Detailed Experimental Protocols}
\label{app:protocols}

\subsection{Pre-registration Sequence}

The evaluation follows a fixed order to prevent test leakage:
\begin{enumerate}
  \item freeze benchmark versions, evolution/validation/test IDs, and task-template groups;
  \item construct native and structured-growth source bundles using only evolution tasks;
  \item freeze the bundle builder, path annotations, environment contracts, and source digests;
  \item tune only declared thresholds on validation tasks;
  \item freeze compressor code, prompts, model snapshots, tokenizers, and non-inferiority margin;
  \item run every compressor on the same source digests;
  \item audit outputs before any held-out execution;
  \item execute all valid or fallback outputs on paired held-out tasks;
  \item generate tables directly from immutable raw records.
\end{enumerate}
Any post-freeze repair creates a new experiment revision and reruns all affected baselines.

\subsection{Bundle Construction Checks}

The structured-growth builder moves contiguous source spans into references or subskills and replaces them with explicit guarded links. Before compression, three checks establish that this transformation is not itself responsible for performance changes: (i) concatenating files in provenance order recovers all normative source spans; (ii) every moved span has exactly one reachable dispatcher; and (iii) the evolved flat skill and the structured bundle are compared on validation tasks under the same executor. A task-score difference above one percentage point triggers manual review and rebuilding.

Each benchmark includes at least four branch types: a common path, a rare format/tool path, a verification path, and a recovery path. Path-frequency scenarios include observed validation traces, uniform leaf paths, and a skewed stress distribution with 80\% mass on the common path. This separates gains from bundle structure from gains tied to one traffic assumption.

\subsection{Gold Resource Sets}

For each held-out task, annotators identify the minimal set of source resources required to satisfy the explicit task and benchmark rubric. They may inspect the task, source bundle, tool schema, and official expected behavior, but not any compressed output. A second annotator independently labels a stratified 20\% sample covering all branches. Disagreements are adjudicated before execution; Cohen's $\kappa$ and raw agreement are reported.

A resource is required if omitting it removes a unique normative unit needed by the task. Redundant references are marked acceptable alternatives rather than jointly required. Scripts invoked by a required workflow count as required even if their source code is not injected into the model context.

\subsection{Run Matrix}

The full factorial matrix is summarized in Table~\ref{tab:runmatrix}. Budget-matched and native-cost comparisons are separate run groups. The no-skill condition is executed but not compressed.

\begin{table}[h]
\centering
\caption{Minimum run matrix before stochastic seeds.}
\label{tab:runmatrix}
\begin{tabular}{lr}
\toprule
Dimension & Levels \\
\midrule
Benchmarks & 3 \\
Bundle tracks & 2 \\
Compression methods & 7 \\
Compressor models & 3 \\
Executor models & 3 \\
Traffic scenarios (cost only) & 3 \\
Primary seeds & 3 if stochastic \\
\bottomrule
\end{tabular}
\end{table}

For the main table, compressor model and traffic estimator are fixed in advance, reducing the execution grid to all methods, benchmarks, tracks, executors, tasks, and seeds. The full compressor--executor cross product is used only for RQ4.

\subsection{Harness Instrumentation}

Instrumentation records timestamp, task ID, selected skill, read path, read result, byte count, model-token count, caller action, and current branch label. It does not expose gold paths to the agent. Catalog and root injection are recorded as synthetic events emitted by the existing harness; auxiliary loads are recorded at the ordinary file-reading boundary.

Two validation tests are mandatory. The \emph{visibility test} verifies that the agent cannot read the trace buffer. The \emph{equivalence test} replays a fixed set of actions with instrumentation on and off and compares all task-visible observations byte-for-byte. Hashes of the harness executable and configuration accompany each trace.

\subsection{Task Execution and Failure Policy}

A compression-time exception, audit rejection, or timeout produces the original bundle as the executable fallback. Its task score and uncompressed costs remain in the method's aggregate; excluding failed cases would reward unsafe aggressiveness. An execution-time missing file, invalid path, or unavailable generated module counts as both task failure (when it prevents completion) and a loading error.

Task timeouts and transient provider failures are distinguished. A provider failure is retried under the same seed up to the predeclared limit; a reproducible model or agent failure is not retried. Every exclusion is listed by task ID and reason.

\subsection{Metric Computation}

Catalog and activation cost are deterministic functions of published files. Deployment cost includes all files distributed to the agent except the audit manifest when the production package explicitly excludes it; both inclusive and runtime-package counts are recorded. Path cost is calculated from actual load events, including duplicate loads. Mean and P95 are paired over identical tasks.

For a task $t$, required-file recall is defined as one when $R_t$ is empty. Irrelevant-load rate is zero when $L_t$ is empty. Generated shared modules inherit the union of source-unit annotations they cover, allowing them to count as a valid substitute for original resources. A capsule is relevant only when its guard is active for the task.

Quality macro-averages first within benchmark categories, then across the three benchmarks, preventing a large benchmark from dominating. Compression ratios are computed from summed token costs before averaging; we additionally report median per-bundle reduction to reveal size effects.

\subsection{Confidence Intervals and Tests}

The task is the resampling unit. Paired bootstrap samples preserve all method outcomes, path costs, and seeds for a task. The primary comparison is \method versus Evolved Bundle for quality and versus Root-only \base for mean-path reduction. Secondary comparisons cover other baselines and cost layers.

We declare quality non-inferiority when the lower bound of the paired 95\% interval exceeds $-0.01$. Only after non-inferiority is established do we test compression superiority. McNemar's test operates on paired binary outcomes; multi-level official scores use paired bootstrap or permutation tests. Holm correction is performed separately within RQ2--RQ7.

\subsection{Table Population Contract}

Every cell in Section~\ref{sec:experiments} is emitted by the table script from a
row-oriented result file; nothing is transcribed by hand. The pipeline writes one
JSONL record per (benchmark, track, method) with the following shape, and the
script renders each table body from those records:
\begin{lstlisting}[style=compactjson]
{
  "benchmark": "...", "track": "native|structured",
  "method": "pro|root_only|flat_concat|skillreducer|expert|...",
  "cost": {"catalog_tokens": 0, "activation_tokens": 0,
           "deployment_text_tokens": 0, "path_mean_tokens": 0.0,
           "path_max_tokens": 0, "path_count": 0},
  "J": 0.0,
  "reductions": {"catalog": 0.0, "activation": 0.0,
                 "deployment": 0.0, "mean_path": 0.0,
                 "max_path": 0.0, "J": 0.0},
  "loading": {"orphaned_modules": 0, "ref_errors": 0,
              "dangling": 0, "unsafe": 0, "reachable_files": 0},
  "routing": {"pairs_before": 0, "pairs_exact_kept": 0,
              "routing_fidelity": 0.0},
  "efficiency": {"time_s": 0.0, "compress_calls": 0,
                 "rollouts": 0, "peak_gb": 0.0},
  "report": {"selected_verbatim": false, "audit_ok": true,
             "promotions_count": 0, "capsules_count": 0,
             "env_drops_count": 0}
}
\end{lstlisting}
Quality and behavioral-loading records are stored separately, keyed by the same
(benchmark, track, method) triple plus an executor field, and carry
\texttt{accuracy}, \texttt{required\_recall}, \texttt{irrelevant\_load},
\texttt{dispatch}, and the per-task read traces. Continual records add the
per-patch trajectory with $J(B_t^{\mathrm{one}})$, $J(B_t^{\mathrm{cont}})$, the
regret of Eq.~\eqref{eq:continualregret}, and an explicit repack flag.

The generated artifacts are: \texttt{costs.jsonl} (method costs and loading
structure), \texttt{ablations.jsonl}, \texttt{quality.jsonl},
\texttt{transfer.jsonl}, \texttt{continual.jsonl},
\texttt{lambda\_sweep.jsonl}, and \texttt{repack\_sensitivity.jsonl}. A failed
compression remains in the denominator and is represented by the cost of its
verbatim fallback, which the record marks with \texttt{selected\_verbatim}.

\subsection{Zip-on-Write Replay}

Each evolution patch is a transaction with timestamp, added/modified/moved/deleted paths, inserted source spans, and expected source digest. All schedules see patches in identical order and operate on the same verbatim authored checkpoint: append-only, Pro One-Shot after every patch, root-only \base Zip-on-Write, Pro Continual without repacking, and Pro Continual with the combined repack policy. We checkpoint after every patch and measure changed contract units, invalidation-closure size, reused-contract fraction, rewritten bytes, model calls, update latency, four costs, objective regret, audit outcome, fallback type, and validation quality. Held-out test quality is measured only at predeclared checkpoints to avoid adaptive tuning to the test set.

The growth plot uses patch index on the horizontal axis and reports absolute tokens, not only normalized reduction. Additional panels report objective regret, cumulative compression cost, and closure fraction so that frequent small updates are not favored by hiding total overhead. Repack events and their trigger causes are overlaid. Every rejected compression is checked automatically to ensure that the published fallback digest equals the verbatim patched snapshot, not the previous version.

\subsection{Exact Setup of the Reported Runs}
\label{app:setup}

This subsection records, in full, the configuration behind every number in
Section~\ref{sec:experiments}, so that a reader can reproduce it without reading
the code.

\textbf{Bundle tracks.} Three tracks are used. (i) \emph{Plain}: the evolved class
skills assembled into a root with a routing list, one reference per class, one class
as a nested subskill, and one locked helper script. (ii) \emph{Grown}: the same
skills with evolution patches replayed into named branches, which re-append the
output rules, the verification checklist and the accumulated pitfall list into every
branch, plus one file of long guarded edge cases. A deterministic builder records
the source span and patch identifier of every moved section and never rewords text.
(iii) \emph{Real evolved libraries}: three self-evolution runs stored with the
project, each keeping all of its rounds as separate files, giving 17 files, 15
rounds, roughly 20{,}000 tokens, and 79--84\% repeated text. Main cost tables use
the grown track; Sections~\ref{sec:evolved}--\ref{sec:startearly} use the real
libraries.

\textbf{Held-out task counts.} Math 33, BFCL 21, SpreadsheetBench 48, giving 102
tasks pooled. The same tasks are used for every method within a benchmark, which is
what makes the paired test valid. SpreadsheetBench is sampled at twice the density
of the other two because its verdict is the strictest (a task passes only if every
one of its test cases reproduces the gold range), so single-task flips dominate at
small $n$; at $n=24$ the ordering between the compressed and uncompressed bundles
turned on one task, and doubling the sample removed that artefact without changing
any method's configuration. BFCL runs with its offline search tool enabled;
without the tool the skill provides no measurable benefit on that benchmark (no
skill and the uncompressed library score identically), so that configuration cannot
separate compression methods and is not used.

\textbf{Compressor configuration.} Structural extraction and structural audit
(no model calls), capsule minimum 40 tokens, shortest repeated statement worth
factoring 8 tokens, storage weight $\lambda=0.05$, navigation locked, literal
blocks locked, and pre-existing source reference defects tolerated but never
introduced. Every method receives the identical source bundle and the identical
environment contract.

\textbf{Environment contract.} Guarantees are only those the benchmark harness
itself puts in front of the agent on every task, quoted from the harness: for math,
``Solve the PROBLEM and end with the final result on its own line as
\texttt{\textbackslash boxed\{...\}}''; for BFCL, ``Decompose it into ordered hops
and resolve each one'' and ``end with a line EXACTLY in the form `ANSWER: <short
answer>'\,''; for SpreadsheetBench, ``Read the workbook at path IN and write the
modified workbook to path OUT'', ``Respond with ONE \texttt{run\_python} block
\ldots{} the variables IN and OUT are predefined'', and ``write general logic (do not
hard-code values you can compute)''. A statement the harness does not supply is
never listed as a guarantee, because removing it would take away a rule the agent
still needs.

\textbf{Knowledge kept.} A content unit is one behaviour-bearing line of the source
(a bullet or sentence of at least a few words, excluding headings and fenced code).
A unit counts as kept if its wording still appears, verbatim or lightly reworded,
in a file the agent can reach by following links from the root; matching accepts
either high local edit similarity or high local coverage of the unit's content
words inside one window, so moving a qualifier is not scored as a loss while
scattering words across unrelated files is. Units removed under the environment
contract are reported separately as witnessed removals.

\textbf{Continual replay.} Rounds are appended one at a time in evolution order.
All five schedules start from the identical state and see the identical rounds.
Repacking runs every fourth round after compression is switched on. Drift is the
relative difference in objective against rebuilding the whole library at that same
round. Model calls are counted as one per changed textual node for a local update
and one per textual node for a rebuild.

\textbf{Start-time ablation.} Compression is switched on at round
$k\in\{1,4,7,10,13\}$, and for comparison never. Everything else is held fixed. The
reported total is the sum, over rounds, of the per-run cost of the library published
at that round, which is what an agent actually pays while the library evolves.

\subsection{Compounding Analysis (Section~\ref{sec:compounding})}
\label{app:compounding}

The three findings behind the compounding effect are computed from the same
full-resolution real evolved libraries used for the continual study, with no new
model calls. For each round $k$ we build the prefix holding the first $k$ rounds,
measure its raw shipped size $U(k)$, compress it one-shot with \method{} to size
$C(k)$, and record both. The compression ratio at round $k$ is $1-C(k)/U(k)$
(Figure~\ref{fig:compounding}a). The repeated share of round $k$ is
$1-\big(C(k)-C(k-1)\big)/\big(U(k)-U(k-1)\big)$: the numerator is the genuinely new
text the compressor kept, the denominator is all text the round added, so their
complement is the fraction of the round that was already present
(Figure~\ref{fig:compounding}b). Reuse concentration counts, for each shared module
\method{} created, how many strategy files link to it, sorted descending
(Figure~\ref{fig:compounding}c). We report the two libraries whose one-shot
compression cleared the never-inflate check at every prefix; the third is excluded
because at some prefixes its candidate did not beat the source and was republished
verbatim, which would report a $0\%$ ratio that reflects the safety guard rather
than the redundancy structure under study. All numbers are averaged across the
included libraries and are monotone in the same direction for each one
individually.

\section{Additional Analysis}
\label{app:analysis}

\subsection{Why a Deployment-Only Objective Is Insufficient}

Consider a bundle with a 200-token root, two rare references of 800 tokens each, and an exact 300-token fragment shared by those references. Each rare branch is used by 5\% of tasks. Global factoring into the root reduces deployment by nearly 300 tokens (minus references), yet increases expected runtime exposure by approximately $0.9\times300=270$ tokens per task. Factoring into a shared on-demand module retains the deployment saving while charging the module only to the 10\% affected paths. This example illustrates why deduplication scope is part of the representation, not an implementation detail.

The same reasoning applies to capsules. Moving a 600-token export guide behind a 20-token dispatcher at 10\% frequency saves roughly $600-(20+60)=520$ expected path tokens before deployment weighting. At 95\% frequency, the saving falls to 10 tokens and may disappear after reference overhead. The path distribution changes the best layout even when the underlying prose does not.

\subsection{Sensitivity to Unknown Traffic}

When traces are unavailable, uniform leaf-path weights can overvalue rare branches in a deep tree or undervalue a broad common branch. \method therefore exposes three policies: \texttt{uniform-leaf}, \texttt{guard-prior}, and \texttt{minimax}. The minimax policy accepts a transformation only when it improves $J$ for every distribution in a declared uncertainty set. It is more conservative but avoids large regressions under traffic shift.

We recommend reporting both the optimization distribution and counterfactual distributions. A transformation that wins only under a narrow prior should be labeled workload-specific. Host-entailment witnesses are independent of traffic, while capsule and sharing placement are not.

\subsection{Faithfulness Is Structural, Not Universal Semantics}

Typed coverage protects explicit instructions and their scope, but natural language can carry implicature, tone, and redundancy that help a particular model. Removing redundant emphasis may affect behavior even if normative content is retained. This is why the empirical protocol measures non-inferiority across executor families and why a strict structural audit does not replace held-out evaluation in scientific validation. ``Evaluation-free'' describes the compression algorithm, not the evidence standard for publishing its effectiveness.

The audit can also share errors with extraction. Independent prompts, source-span provenance, deterministic type checks, and verbatim locking reduce correlated failure but cannot eliminate it. High-stakes skills should use human review or stronger formal specifications for critical obligations.

\end{document}